\documentclass[lettersize,journal]{IEEEtran}

\IEEEoverridecommandlockouts                              

\usepackage{mathtools}
\usepackage{gensymb}
\usepackage[export]{adjustbox}
\usepackage{color}
\usepackage{graphics} 
\usepackage{epsfig} 
\usepackage{times} 
\usepackage{amsmath} 
\usepackage{amssymb}  
\usepackage{leftidx}
\newtheorem{theorem}{Theorem}
\newtheorem{remark}{Remark}

\newtheorem{property}{Property}

\newtheorem{mydef}{Definition}
\usepackage{siunitx}
\usepackage{cite}[noadjust]
\usepackage{nomencl}
\makenomenclature
\usepackage{hyperref}
\DeclareMathOperator{\diag}{diag}

\usepackage{enumitem}
\usepackage{algorithm,algorithmic}
\usepackage{tablefootnote}
\usepackage{ushort}
\newcommand{\norm}[1]{\left\lVert#1\right\rVert}
\usepackage{stackengine}
\usepackage{multirow}  

\title{
A Switched Adaptive Control Framework for Aerial Manipulators Under Dynamic Transitions
}

\author{Rishabh Dev Yadav$^{1}$, Saksham Gupta$^{2}$, Amitabh Sharma$^{2}$, Sarthak~Mishra$^{2}$, \\ Wei Pan$^{1}$, Spandan Roy$^{2}$, Simone Baldi$^{3}$
\thanks{The work is partly supported by the UASAT project sponsored by MeiTY, India, and by Jiangsu Provincial Scientific Research Center of Applied Mathematics No. BK20233002.}
\thanks{$^{1}$ The authors are with Department of Computer Science, The University of Manchester, UK {(email: rishabh.yadav@postgrad.manchester.ac.uk, wei.pan@manchester.ac.uk).}   }

\thanks{$^{2}$ The authors are with Robotics Research Center, International Institute of Information Technology Hyderabad, India {(email: \{saksham.g, amitabh.sharma, sarthak.mishra\}@research.iiit.ac.in, spandan.roy@iiit.ac.in).}   }
\thanks{$^{3}$ S. Baldi is with the School of Mathematics, Southeast University, Nanjing,
China (e-mail: simonebaldi@seu.edu.cn).
}
 \thanks{ 
\textit{Corresponding authors: S. Baldi and S. Roy}}
}

\begin{document}
\bstctlcite{IEEEexample:BSTcontrol}

\maketitle
\thispagestyle{plain}
\pagestyle{plain}

\setlength{\belowcaptionskip}{-10pt}

\begin{abstract}
Aerial manipulators represent the forefront of aerial robotics. Although potentially capable of complex interaction tasks, controlling aerial manipulators throughout the dynamic transitions occurring during task execution presents significant challenges. Abrupt or discontinuous changes in system dynamics generated by the transitions suggest the use of a switched approach, yet the available aerial manipulation methods are not designed for coping with switched regimes. In addition, most available methods fall short in coping with the tight couplings between the aerial vehicle and the manipulator, as well as in coping with the state-dependent uncertainties arising from the difficulty in modeling such couplings. We propose a switched-based adaptive control framework for aerial manipulators not relying on a priori knowledge of the vehicle-manipulator couplings and of state-dependent uncertainties. To guarantee stable manipulation despite changes in system dynamics, the framework provides a class of switching signals characterizing those transition phases for which the system is guaranteed to remain stable. Comparative experiments further validate the effectiveness of the proposed switched-based framework over the state of the art.
\end{abstract}

\begin{IEEEkeywords}
Adaptive control, Aerial manipulator, Dynamic transitions, Switching control.
\end{IEEEkeywords}

\section{Introduction}
Aerial manipulators enhance dexterity by integrating robotic arms with aerial vehicle platforms, enabling operations in scenarios like disaster response, inspection, and construction \cite{ tognon2019truly, 10758214}. However, these and other scenarios often involve sudden dynamic transitions during task execution (pick-and-place, contact-based inspection,  etc.). %

As a representative example of dynamic
transitions, take a payload pick-up and transportation task, one of the most fundamental yet challenging tasks for an aerial manipulator \cite{orsag2017dexterous}. The fundamental challenge during such task is that the dynamics of the aerial manipulator do not evolve according to a single operational regime, but make transitions between different regimes, such as free-flight without payload, object grasping, flight with payload and payload release. It is impossible for a single model to capture all the regimes, as each regime is governed by different dynamical properties and interaction forces. This regime-dependent behavior arises in several other tasks: it results in switched dynamics that are the main reason why it is challenging for non-switched control methods to compensate for variations in inertia distribution \cite{9462539, 10466505}, as well as compensate for coupling forces between the aerial vehicle and the manipulator as the system moves or interacts with objects \cite[Ch. 5.3]{orsag2018aerial}, \cite{orsag2017dexterous}. 
Advanced control strategies for aerial manipulators are required to ensure stability and accuracy throughout such dynamic transitions \cite{suarez2020benchmarks}. In the following, we discuss the limitations of state-of-the-art controllers developed for aerial manipulators, which lead to the motivation of this work.

\subsection{Related Works and Motivation}
{\color{black} Recent application-level frameworks have explored clutter-aware aerial grasping \cite{singh2026aerograb}, language-grounded object placement \cite{mishra2026aeroplace}, and vision-language reasoning for aerial manipulation \cite{mishra2025aermani}. }
A few strategies based on robust and adaptive control have been developed to specifically address aerial manipulators with modeling uncertainties.  
{\color{black} For cable-suspended manipulation, adaptive controllers have also been developed for payload clasping and swing suppression under state-dependent uncertainties \cite{dantu2023adaptive,10769989}. }
However, by not considering switched behavior, these strategies exhibit intrinsic limitations. Robust control strategies, using either high-gain observers \cite{kim2017robust}, disturbance observers \cite{chen2020robust}, or RISE-based methods \cite{lee2022rise}, rely on nominal system knowledge and predefined uncertainty bounds, making them inflexible to accommodate switched, state-dependent changes in system dynamics. Adaptive control approaches—including adaptive disturbance observers \cite{liang2021low, 10466505}, adaptive backstepping \cite{10505853}, adaptive sliding mode observers \cite{chen2022adaptive}, and adaptive sliding mode control \cite{kim2016vision, liang2022adaptive}—assume smoothly evolving uncertainties and require prior boundedness of the uncertainties and of their time derivative, making them unsuitable for switched state-dependent variations. Even when the requirement of a priori boundedness is relaxed (cf. \cite{liang2022adaptive}), system stability still relies on the knowledge of the inertial couplings 
between the aerial vehicle and the manipulator. The term ``inertial couplings'' refers to couplings between the subsystems of the aerial manipulator (vehicle position, vehicle attitude, and manipulator), acting via off-diagonal terms in the mass/inertia matrix.
These coupling terms are difficult to model as they strongly depend on the interactions between the manipulator and the environment \cite[Ch. 5.3]{orsag2018aerial}.  {\color{black} Recent adaptive designs have further considered integrated aerial grasping \cite{yadav2025integrated} and impedance adaptation under unknown coupling dynamics \cite{sharma2025impedance}. }
To compensate vehicle-manipulator inertial couplings, \cite{10722859} requires a disturbance model. Although \cite{10701509} treated these couplings as state-dependent uncertainties, both \cite{10722859} and \cite{10701509} assume continuous evolution of the system dynamics, failing to account for regime-dependent transitions. It is also worth mentioning that the design of common adaptive gains shared among all the subsystems of the aerial manipulator strongly limits adaptability during transitions (cf. \cite{liang2021low, chen2022adaptive, kim2016vision, liang2022adaptive}). {\color{black} Complementary data-driven approaches have investigated physics-aware sparse dynamics learning \cite{yadav2026physics}, regime-conditioned diffusion models \cite{ujjawal2025aermani}, and multi-scale residual learning with online adaptation \cite{ujjawal2026learn}.}

One may wonder whether existing adaptive controllers for nonlinear switched systems \cite{lai2018adaptive, yuan2018robust,  roy2019reduced, roy2019simultaneous} are applicable to switched aerial manipulator dynamics. Unfortunately, the answer is negative, as the existing switched adaptive controllers either rely on a priori known structure of the system dynamics (cf. \cite{lai2018adaptive, yuan2018robust}) or require known bounds on the inertia matrix (cf. \cite{roy2019reduced, roy2019simultaneous}).  {\color{black} Learning-based approaches have also addressed regime-dependent dynamics through online adaptation of cross-coupled models \cite{yadav2026learning} and changepoint-aware Bayesian dynamics learning \cite{yadav2025arcade}.}

\subsection{Contributions of This Study}
The above discussion reveals that a control method for aerial manipulators capable of handling state-dependent uncertainties and inertial couplings under regime-dependent dynamic variations is a significant open challenge. This study focuses on the aforementioned representative example of payload pick-up and transportation, although the switched-based framework we propose is potentially applicable to other tasks. The challenges are solved with the following contributions:

\begin{itemize}
    \item Modeling the regime-dependent behavior of an aerial manipulator using a switched Euler-Lagrange framework, marking a difference with methods \cite{liang2021low, 10466505, 10505853, chen2022adaptive, kim2016vision, liang2022adaptive, 10722859, 10701509} that assume continuously evolving dynamics.
    
    \item  Designing a switched controller that accounts for the switched dynamics of the aerial manipulator without a priori knowledge of neither state-dependent uncertainties (as opposed to relying on nominal dynamics or uncertainty bounds \cite{kim2017robust, chen2020robust,lee2022rise, liang2021low, chen2022adaptive, kim2016vision }) nor vehicle-manipulator inertial couplings (unlike \cite{liang2022adaptive}). 
    
   \item To guarantee stable manipulation despite changes in system dynamics, we provide not only a stabilizing control law, but also a class of stabilizing switching signals 
   used to characterize all those transition phases for which the system is guaranteed to remain stable.
   
   \item As compared to existing switched adaptive controllers \cite{lai2018adaptive, yuan2018robust, roy2019simultaneous}, we do not require structural knowledge of the system terms, which creates a key challenge in orchestrating the adaptation during active and inactive regimes. Such an orchestration challenge is solved via a properly designed auxiliary gain.
\end{itemize}

The proposed switched-based framework is validated through comparative real experiments and {\color{black}simulations}, demonstrating superior tracking accuracy and robustness. 
\section{System Dynamics and Problem Formulation}
The following notations are used in this paper:  $\norm {(\cdot)}$ and $\lambda_{\min}(\cdot)$ denote the 2-norm and minimum eigenvalue of $(\cdot)$, respectively; $\boldsymbol I$ denotes the identity matrix with appropriate dimension and $\diag\lbrace \cdot, \cdots, \cdot \rbrace$ denotes a diagonal matrix.

\subsection{(Non-Switched) Aerial Manipulator Dynamics}
\begin{figure}[]
\begin{center}
    \includegraphics[scale=0.03]{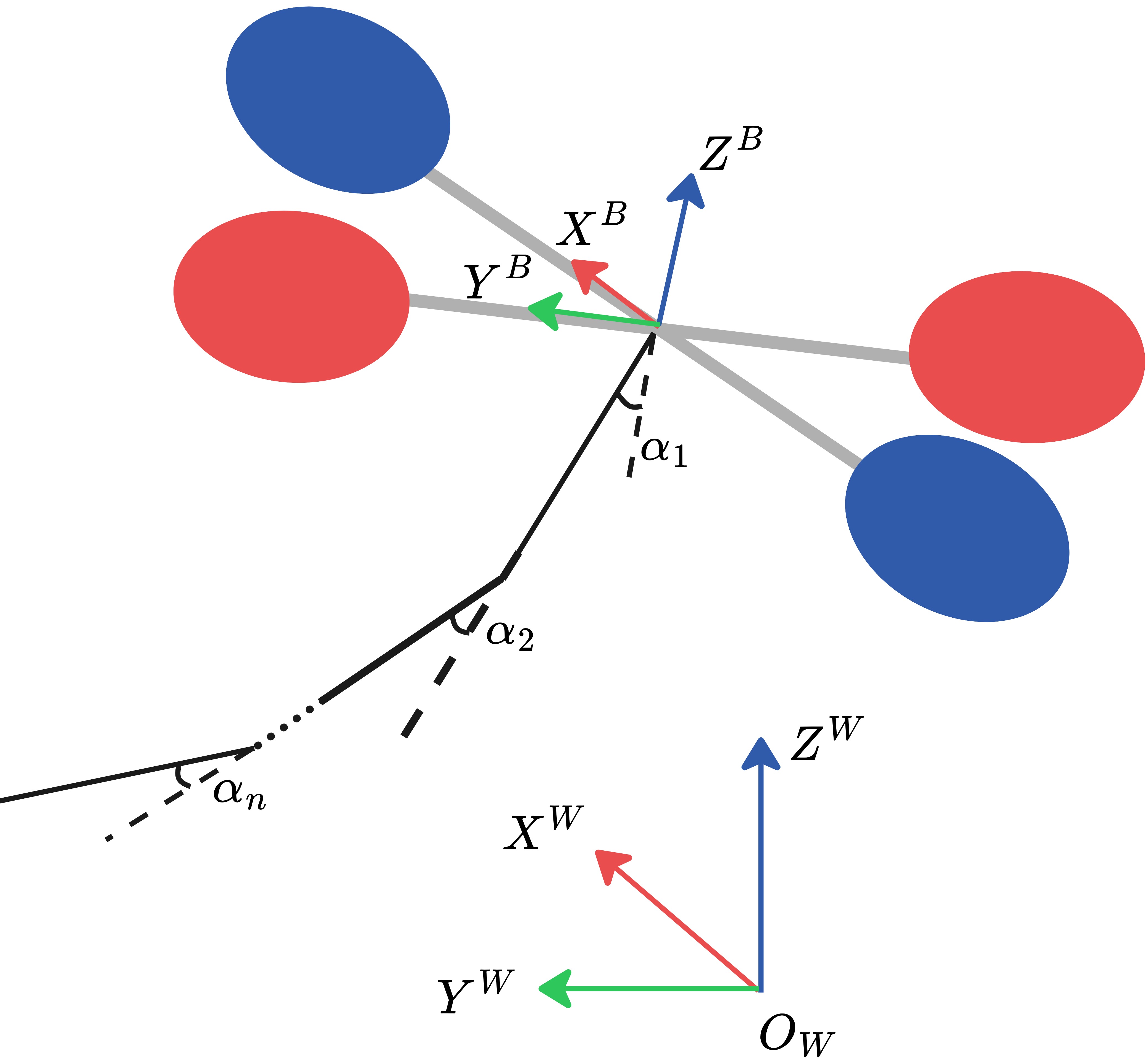}
    \caption{Schematic of a quadrotor-based aerial manipulator with an $n$-link manipulator and the corresponding frames.}
    \label{robot_model}
\end{center}
\end{figure}

\begin{table}[!t]
\renewcommand{\arraystretch}{0.99}
\caption{{Nomenclature}}
\label{table_nomenclature}
\centering
{
{	\begin{tabular}{c||c}
\hline
\hline~
	$[ X^B ~ Y^B ~Z^B ]$& Quadrotor body-fixed coordinate frame\\
	$[ X^W ~ Y^W ~ Z^W]$ & Earth-fixed coordinate frame \\
    $\boldsymbol{R}_B^W \in \mathbb{R}^{3 \times 3}$ & $Z$-$Y$-$X$ Euler angle rotation matrix \\
	$p=[x ~ y ~z]^{\top}$ & Quadrotor position in $[ X^W ~ Y^W ~ Z^W]$\\
	$q=[\phi ~ \theta ~ \psi]^{\top}$ & Quadrotor roll, pitch and yaw angles\\
	$\alpha=[\alpha_1, ~ \alpha_2, ~\cdot \cdot ~, \alpha_n]^{\top}$ & Manipulator joint angles\\
	$\boldsymbol{M, C}\in \mathbb{R}^{(6+n)\times (6+n)}$ & Mass and Coriolis matrix \\ 
	$g\in \mathbb{R}^{6+n}$, $d\in \mathbb{R}^{6+n} $ &  Gravity and disturbance terms \\
	$\tau_p, \tau_q \in \mathbb{R}^3$ & Generalized quadrotor control inputs \\
 $\tau_{\alpha} \triangleq
	\begin{bmatrix}
	\tau_{\alpha_1},~ \tau_{\alpha_2},~\cdots, \tau_{\alpha_n}
	\end{bmatrix}^{\top}$ & Manipulator's joint control inputs \\
\hline 
\hline
\end{tabular}}}
\end{table}
The conventional Euler-Lagrange (EL) dynamical model of a quadrotor vehicle equipped with an $n$ degrees-of-freedom (DoF) manipulator (cf. Fig. \ref{robot_model} and Table \ref{table_nomenclature} for the meaning of  symbols and system parameters) is given by \cite{arleo2013control}
\begin{equation} \small \label{EL_dynamics}
\boldsymbol M(\chi(t))\ddot{\chi}(t) +\boldsymbol C(\chi(t), \dot{\chi}(t))\dot{\chi}(t) + g(\chi(t)) + d(t) = \tau(t),    
\end{equation}
where $\chi = \begin{bmatrix} p^{\top} & q^{\top} & \alpha^{\top}\end{bmatrix}^{\top}, \tau = \begin{bmatrix}
    \tau_p^{\top} &    \tau_q^{\top} &    \tau_\alpha^{\top} \end{bmatrix}^{\top}\in \mathbb{R}^{6+n}$ denote generalized coordinates and generalized inputs, respectively. 
The terms in (\ref{EL_dynamics}) can be decomposed along the quadrotor position, quadrotor attitude, and manipulator subsystems  as
\begin{subequations}\label{split_2}
\begin{align}
&\boldsymbol M = \begin{bmatrix}
    \boldsymbol M_{pp} & \boldsymbol M_{pq} & \boldsymbol M_{p\alpha} \\
    \boldsymbol M_{pq}^{\top} & \boldsymbol M_{qq} & \boldsymbol M_{q\alpha} \\
    \boldsymbol M_{p\alpha}^{\top} & \boldsymbol M_{q\alpha}^{\top} & \boldsymbol M_{\alpha \alpha}
    \end{bmatrix},~\begin{matrix}
\boldsymbol M_{pp}, \boldsymbol M_{qq}, \boldsymbol M_{pq} \in \mathbb{R}^{3\times3}\\ 
\boldsymbol M_{p\alpha}, \boldsymbol M_{q\alpha} \in \mathbb{R}^{3 \times n}\\ 
\boldsymbol M_{\alpha \alpha} \in \mathbb{R}^{n \times n}
\end{matrix} \label{mass_split}\\
&\boldsymbol C = \begin{bmatrix}
    \boldsymbol C_p \\
     \boldsymbol C_q \\
    \boldsymbol C_\alpha
    \end{bmatrix}, ~\begin{matrix} \boldsymbol C_p , \boldsymbol C_q \in \mathbb{R}^{3 \times (6+n)} \\ \boldsymbol C_\alpha \in \mathbb{R}^{n \times (6+n)}  \end{matrix} \label{new_dyn1}\\
&g = \begin{bmatrix}
    g_p \\
    g_q \\
    g_\alpha
    \end{bmatrix},
    d = \begin{bmatrix}
    d_p \\
    d_q \\
    d_\alpha
    \end{bmatrix}, ~\begin{matrix} g_p, g_q,d_p,d_q \in \mathbb{R}^{3}\\ g_\alpha, d_\alpha \in \mathbb{R}^{n}\end{matrix}. \label{new_dyn2}
\end{align}
\end{subequations}
 Here $\tau_q  \triangleq 
	\begin{bmatrix}
	u_2(t) & u_3(t) & u_4(t)
	\end{bmatrix}^{\top}$ is the roll, pitch and yaw control inputs for the quadrotor; ${\tau_{p}} = {\boldsymbol R^W_B U}$ is the generalized control input for quadrotor position in Earth-fixed frame, being ${U}(t)\triangleq
	\begin{bmatrix}
	0 & 0 & u_1(t)
	\end{bmatrix}^{\top}\in \mathbb{R}^3$ the force vector in body-fixed frame, $u_1$ the total thrust and ${\boldsymbol R^W_B} \in\mathbb{R}^{3\times3}$ the $Z$-$Y$-$X$ orthogonal Euler angle rotation matrix describing the rotation from the body-fixed to the Earth-fixed frame \cite{mellinger2011minimum}. We stress that the off-diagonal terms of $\mathbf M$ in (\ref{mass_split}), represent the vehicle-manipulator couplings, that is, the inertial couplings between the quadrotor position, quadrotor attitude, and manipulator subsystems. These terms, along with the coupling terms appearing in $\boldsymbol{C}$, are known to be difficult, if at all possible, to model \cite[Ch. 5.3]{orsag2018aerial}.

\begin{figure}[!b]
\centering
\includegraphics[width=0.48\textwidth]{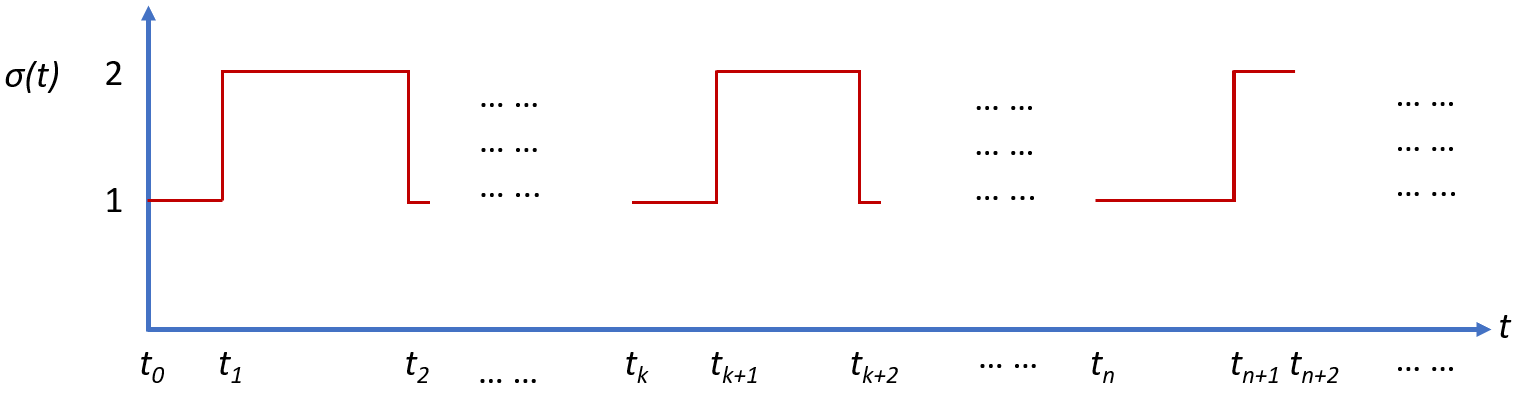}
 \caption{A representative switching signal $\sigma$. }
\label{fig:ref_sig}
\end{figure}

\subsection{Switching Signal and Preliminaries}
The changes in the aerial manipulator dynamics during payload pick-and-place are not restricted to differences in the system mass, but also result from the different configurations of the manipulator during transportation and release, which lead to varying inertia distributions. To capture different operational regimes, this work considers a piecewise constant function of time $\sigma(\cdot): [0~\infty) \mapsto \Omega $ taking values in $\Omega = \lbrace 1, 2 \rbrace $,  
where (i) $\sigma(t)=2$ when the aerial manipulator picks up and carries a payload; (ii) $\sigma(\cdot)=1$ when the aerial manipulator releases the payload. In switched systems literature, $\sigma(\cdot)$ is known as the \emph{switching signal}  \cite{liberzon2003switching}, and can be extended to deal with more regimes than the ones considered in this work. An illustrative example of $\sigma(t)$ is shown in Fig. \ref{fig:ref_sig}, where $t_k$ denotes the switching instants (i.e. {when} the operational regime of the quadrotor changes) and the time $t \in [t_k~~t_{k+1})$ between consecutive switching instants is called the `\emph{dwell-time}'. 
We follow an average dwell-time description of the switching signal: we first give its formal definition and then explain its rationale.

\begin{mydef} \label{def1}
(Average Dwell-Time (ADT) \cite{liberzon2003switching}): For a switching signal $\sigma(\cdot)$ and time instants $T_2 \geq T_1 \geq 0$, let $N_{\sigma}(T_1,T_2)$ denote the number of switchings in the interval $[T_1,T_2)$. Then, $\sigma(\cdot)$ has an average dwell time $\vartheta$ if
\begin{align}
N_{\sigma}(T_1,T_2) &\leq N_0 + (T_2-T_1)/\vartheta,~~ \forall T_2 \geq T_1 \geq 0, \label{ADT}
\end{align} 
\end{mydef}
where $N_0 >0$ is a scalar termed as chatter bound. Since $N_0>0$, the ADT in (\ref{ADT}) implies
\begin{align}
{\color{black}    \vartheta \leq
\frac{T_2-T_1}{N_\sigma(T_1,T_2)-N_0}.}
    \label{ADT_new}
\end{align}

\begin{remark}[ADT rationale]\label{rem_adt}
In practice, 
the length of the switching intervals $t_{k+1}-t_k$
may vary during operation (cf. Fig. \ref{fig:ref_sig}). {\color{black}The notion of ADT in (\ref{ADT_new}) captures these variations by defining the duration of the interval between switching instants in an average sense: this allows to have short dwell-times $t_{k+1}-t_k < \vartheta $ provided that they are compensated by longer dwell-times, i.e., $t_{k+1}-t_k>\vartheta$.} According to (\ref{ADT_new}), the chatter bound gives additional flexibility in distinguishing between shorter and longer time intervals.
\end{remark}

\subsection{Switched Model of an Aerial Manipulator}
Based on the discussions on the previous subsection and the EL dynamics (\ref{EL_dynamics}), the switched dynamics of an aerial manipulator can be formulated as
\begin{align} 
\small \label{switched_EL_dynamics}
\boldsymbol M_{\sigma(t)}(\chi(t))\ddot{\chi}(t) + &\boldsymbol C_{\sigma(t)}(\chi(t), \dot{\chi}(t))\dot{\chi}(t) +  g_{\sigma(t)}(\chi(t)) \nonumber\\
&+ d_{\sigma(t)}(t) = \tau_{\sigma(t)}.   
\end{align}
{\color{black}Note that a switching disturbance $d_\sigma$ may represent contact forces that appear during payload grasping ($\sigma=2$) and disappear during release ($\sigma=1$).} In the following, let us omit time and variable dependence whenever obvious. For the ease of control design and analysis and using (\ref{split_2}), the switched dynamics (\ref{switched_EL_dynamics}) can be rewritten as
\begin{subequations} 
\label{dynamics}
\small
\begin{align}
&\boldsymbol M_{pp\sigma}\ddot{p} + \boldsymbol M_{pq\sigma}\ddot{q} + \boldsymbol M_{p\alpha\sigma}\ddot{\alpha} + \boldsymbol C_{p\sigma}\dot{\chi} + g_{p\sigma} + d_{p\sigma} = \tau_{p\sigma},  \label{pos}\\
&\boldsymbol M_{pq\sigma}^{\top}\ddot{p} + \boldsymbol M_{qq\sigma}\ddot{q} + \boldsymbol M_{q\alpha\sigma}\ddot{\alpha} + \boldsymbol C_{q\sigma}\dot{\chi} + g_{q\sigma} + d_{q\sigma} = \tau_{q\sigma}, \label{att}\\
&\boldsymbol M_{p\alpha\sigma}^{\top}\ddot{p} + \boldsymbol M_{q\alpha\sigma}^{\top}\ddot{q} + \boldsymbol M_{\alpha\alpha\sigma}\ddot{\alpha} + \boldsymbol C_{\alpha\sigma} \dot{\chi} + g_{\alpha\sigma} + d_{\alpha\sigma} \hspace{-0.5mm}= \hspace{-0.5mm}\tau_{\alpha\sigma}, \label{man}
\end{align}
\end{subequations}
where (\ref{pos}), (\ref{att}) and (\ref{man}) represent the quadrotor position subsystem, quadrotor attitude subsystem and manipulator subsystem along with their couplings, respectively.

The following standard system properties hold from the EL mechanics \cite{spong2008robot}:
\begin{property} \label{prop_1}
{$\boldsymbol M_\sigma(\chi)$} is uniformly positive definite $\forall \chi$ and $ \exists \ushort{m}_\sigma,\overline{m}_\sigma \in \mathbb{R}^{+}$ such that $0 < \ushort{m}_\sigma \boldsymbol I \leq \boldsymbol M_\sigma(\chi) \leq \overline{m}_\sigma \boldsymbol I$. 
\end{property}

\begin{property} \label{prop_2}
$\exists \bar{c}, \bar{g}, \bar{d}\in\mathbb{R}^{+}$ such that $||\boldsymbol C_\sigma (\chi, \dot{\chi})|| \leq \bar{c}_ \sigma||\dot{\chi}||$, $||g_\sigma (\chi)|| \leq \bar{g}_\sigma$ and $||d_\sigma(t)|| \leq \bar{d}_\sigma$. This implies, from (\ref{new_dyn1})-(\ref{new_dyn2}), $\exists \bar{c}_{p \sigma}, \bar{c}_{q\sigma}, \bar{c}_{\alpha\sigma}, \bar{g}_{p\sigma}, \bar{g}_{q\sigma}, \bar{g}_{\alpha\sigma}, \bar{d}_{p\sigma}, \bar{d}_{q\sigma}, \bar{d}_{\alpha\sigma} \in\mathbb{R}^{+}$ such that: $||\boldsymbol C_{p\sigma} (\chi, \dot{\chi})|| \leq \bar{c}_{p\sigma}||\dot{\chi}||, ||\boldsymbol C_{q\sigma} (\chi, \dot{\chi})|| \leq \bar{c}_{q\sigma}||\dot{\chi}||, ||\boldsymbol C_{\alpha\sigma} (\chi, \dot{\chi})|| \leq \bar{c}_{\alpha\sigma}||\dot{\chi}||$, $||g_{p\sigma} (\chi)|| \leq \bar{g}_{p\sigma}, ||g_{q\sigma} (\chi)|| \leq \bar{g}_{q\sigma}, ||g_{\alpha\sigma} (\chi)|| \leq \bar{g}_{\alpha \sigma}$, $||d_{p\sigma} (t)|| \leq \bar{d}_{p\sigma}, ||d_{q \sigma} (t)|| \leq \bar{d}_{q \sigma}, ||d_{\alpha \sigma} (t)|| \leq \bar{d}_{\alpha \sigma}$.
\end{property}

In the following, we highlight the model uncertainties considered in this work.
\begin{remark}[Uncertainty] \label{remark_uncertainity}
All the system dynamics terms $ \boldsymbol M_\sigma, \boldsymbol C_{p \sigma}, \boldsymbol C_{q\sigma}, \boldsymbol C_{\alpha\sigma}, g_{p\sigma}, g_{q\sigma}, g_{\alpha\sigma}, d_{p\sigma}, d_{q\sigma}, d_{\alpha\sigma}$
and their bounds $\overline{m}_\sigma, \ushort{m}_\sigma, \bar{c}_{p\sigma}, \bar{c}_{q\sigma}, \bar{c}_{\alpha\sigma}, \bar{g}_{p\sigma}, \bar{g}_{q\sigma}, \bar{g}_{\alpha\sigma}, \bar{d}_{p\sigma}, \bar{d}_{q\sigma}, \bar{d}_{\alpha\sigma}$ defined in Properties 1-2 are unknown for control design. 
\end{remark}

As common in the literature, the desired trajectories $\chi_d = \begin{bmatrix} p_d^{\top} & q_d^{\top} & \alpha_d^{\top}\end{bmatrix}^{\top}$ and their time-derivatives $\dot{\chi}_d, \ddot{\chi}_d$ are designed to
be bounded. Furthermore, $\chi, \dot{\chi}, \ddot{\chi}$ are considered to be available for feedback. Let us define the tracking error as 
\begin{align}
  e(t) \triangleq \chi(t) - \chi_d(t),  ~\xi(t) \triangleq \begin{bmatrix}
	e^{\top}(t) & \dot{e}^{\top}(t)
	\end{bmatrix}^{\top}. \label{err}
\end{align}
We recall the fact that stabilization of each single regime (i.e., with a fixed $\sigma$) is not enough to guarantee stability of the switched dynamics, i.e., with a time-varying $\sigma(\cdot)$), see \cite{lai2018adaptive, yuan2018robust,    roy2019reduced, roy2019simultaneous}. As a result, 
the switching law $\sigma(\cdot)$ contributes to determine the stability of the aerial manipulator system. Accordingly, the control problem is formulated as follows. 

\textit{Control Problem:} Under Properties 1-2 and subject to  the state-dependent uncertainties described in Remark 2, the control problem is to design a switched adaptive control and a stabilizing class of ADT switching signals  for the aerial manipulator switched dynamics (\ref{switched_EL_dynamics}). 

\section{Proposed Switched Adaptive Control Design}
To present a unified formulation applicable to all subsystems, let \( j \in \{p, q, \alpha\} \) denote the subsystem index corresponding to the \emph{quadrotor position}, \emph{attitude}, and \emph{manipulator} dynamics, respectively. The control law and adaptive update rules are then derived in compact form for each subsystem, followed by the definition of the ADT-based switching condition that guarantees closed-loop stability.

\subsection{Position, Attitude and Manipulator sub-systems control}
Taking the tracking error as $e_j \triangleq j - j_d$, let us define an error variable as
\begin{align}
     s_j &= \dot{e}_j + \boldsymbol \Phi_j e_j \label{eq:s_j},
 \end{align}
where $\boldsymbol \Phi_{j} $ is a positive definite matrix. Multiplying the time derivative of (\ref{eq:s_j}) by a user-defined constant positive definite matrix $\bar{\boldsymbol{M}}_{jj\sigma}  $ and using (\ref{dynamics}) yields
\begin{align}
\bar{\boldsymbol{M}}_{jj\sigma} \dot{s}_j &= \bar{\boldsymbol{M}}_{jj\sigma}(\ddot{j}- \ddot{j}_d+ \boldsymbol \Phi_{j} {\dot{e}_j}) \nonumber \\ 
&= {\tau_{j\sigma}} + \mathcal{E}_{j\sigma} - \bar{\boldsymbol{M}}_{jj\sigma}(\ddot{j}_d- \boldsymbol \Phi_{j} {\dot{e}_j}), \label{eq:sj_dot}
\end{align}
where $\mathcal{E}_{j\sigma}$ denotes the \textit{lumped uncertainty} associated with each subsystem. 
The term aggregates all unmodeled dynamics, inter-subsystem coupling effects, inertia mismatches, and external disturbances.
Because the coupling structure and parameter dependencies differ across subsystems, $\mathcal{E}_{j\sigma}$ is defined separately as

\begin{subequations} 
\footnotesize
\begin{align*}
&\mathcal{E}_{p\sigma} \triangleq -(\boldsymbol M_{pp\sigma} - \bar{\boldsymbol M}_{pp\sigma})\ddot{p} - \boldsymbol M_{pq\sigma}\ddot{q} - \boldsymbol M_{p\alpha\sigma}\ddot{\alpha} - \boldsymbol C_{p\sigma}\dot{\chi} - g_{p\sigma} - d_{p\sigma} \nonumber \\
&\mathcal{E}_{q\sigma} \triangleq -({\boldsymbol M}_{qq\sigma} - \bar{\boldsymbol M}_{qq\sigma})\ddot{q} - {\boldsymbol M}_{pq\sigma}^{\top}\ddot{p} - {\boldsymbol M}_{q\alpha\sigma}\ddot{\alpha} - {\boldsymbol C}_{q\sigma}\dot{\chi} - g_{q\sigma} - d_{q\sigma} \nonumber \\
& \mathcal{E}_{\alpha\sigma} \hspace{-0.5mm} \triangleq \hspace{-0.5mm}-({\boldsymbol M}_{\alpha\alpha\sigma} - \bar{\boldsymbol M}_{\alpha\alpha\sigma})\ddot{\alpha} - {\boldsymbol M}_{p\alpha\sigma}^{\top}\ddot{p} - {\boldsymbol M}_{q\alpha\sigma}^{\top}\ddot{q} - {\boldsymbol C}_{\alpha\sigma}\dot{\chi} \hspace{-0.5mm} - \hspace{-0.5mm}  g_{\alpha\sigma} \hspace{-0.5mm} - \hspace{-0.5mm} d_{\alpha\sigma} \nonumber 
\end{align*}
\end{subequations}

The dynamics \eqref{eq:sj_dot} will be used for stability analysis in Sect. \ref{sec:app}, where the matrix  $\bar{\boldsymbol M}_{jj\sigma}$ is instrumental in determining the ADT-based class of stabilizing switching signals (cf. discussions in Sect. \ref{sect_sw}). 
The switching control law for each subsystem is proposed as follows
\begin{subequations}\label{ct}
\begin{align}
\tau_{j\sigma} &= -\boldsymbol \Lambda_{j\sigma} s_j - \Delta \tau_{j\sigma} + \bar{\boldsymbol M}_{jj\sigma}(\ddot{j}_d- \boldsymbol \Phi_{j} {\dot{e}_j}),  \label{tau_j}\\
\Delta \tau_{j\sigma} &= \begin{cases}
    \rho_{j\sigma} \frac{s_{j}}{\norm{s_{j}}}       & ~ \text{if } \norm{s_{j}} \geq \varpi_j,\\
    \label{del_j}
    \rho_{j\sigma} \frac{s_{j}}{\varpi_j}       & ~ \text{if } \norm{ s_{j}} < \varpi_j, \\
    \end{cases} 
\end{align}
\end{subequations}
where $\boldsymbol \Lambda_{j\sigma} $ is a user-defined positive definite matrix, $\varpi_{j} > 0$ is used to avoid chattering, and $\rho_{j\sigma}$ is an adaptive gain to tackle the system uncertainties, whose design will be discussed later.

Using Property~\ref{prop_2} and the inequalities 
$\norm{\ddot{\boldsymbol{\chi}}} \geq \norm{\ddot{j}}$, 
$\norm{\boldsymbol{\xi}} \geq \norm{\dot{\boldsymbol{e}}}$, 
$\norm{\boldsymbol{\xi}} \geq \norm{{\boldsymbol{e}}}$, 
and substituting $\dot{\boldsymbol{\chi}} = \dot{\boldsymbol{e}} + \dot{\boldsymbol{\chi}}_d$, 
one can derive an upper bound for the lumped uncertainty term $\mathcal{E}_{j\sigma}$ as
\begin{align} 
\norm{\mathcal{E}_{j\sigma}} &\leq K_{j0\sigma}^*  +K_{j1\sigma}^*\norm{\xi}+ K_{j2\sigma}^*\norm{\xi}^2 + K_{j3\sigma}^*\norm{\ddot{\chi}},\label{up_boundd_j} 
\end{align}
where the coefficients $K_{ji\sigma}^*$ $(i = 0, \dots, 3)$ are unknown but finite positive scalars, 
given as follows.

\paragraph*{Position subsystem ($j = p$)}
\begin{align*}
K_{p0\sigma}^* &= \bar{g}_{p\sigma} + \bar{d}_{p\sigma} + \bar{c}_{p\sigma}\norm{\dot{\chi}_d}^2,\\
K_{p1\sigma}^* &= 2\bar{c}_{p\sigma}\norm{\dot{\chi}_d},~
K_{p2\sigma}^* = \bar{c}_{p\sigma}, \\
K_{p3\sigma}^* &=  \norm{\boldsymbol M_{pp\sigma} - \bar{\boldsymbol{M}}_{pp\sigma} }+ \norm{\boldsymbol M_{pq\sigma}} + \norm{\boldsymbol M_{p\alpha\sigma}},
\end{align*}

\paragraph*{Attitude subsystem ($j = q$)}
\begin{align*}
K_{q0\sigma}^* &= \bar{g}_{q\sigma} + \bar{d}_{q\sigma} + \bar{c}_{q\sigma}\norm{\dot{\chi}_d}^2,\\
K_{q1\sigma}^* &= 2\bar{c}_{q\sigma}\norm{\dot{\chi}_d},~
K_{q2\sigma}^* = \bar{c}_{q\sigma}, \\
K_{q3\sigma}^* &=  \norm{\boldsymbol M_{qq\sigma} - \bar{\boldsymbol{M}}_{qq\sigma} }+ \norm{\boldsymbol M _{pq\sigma}}+ \norm{\boldsymbol M_{q\alpha\sigma}},
\end{align*}

\paragraph*{Manipulator subsystem ($j = \alpha$)}
\begin{align*}
K_{\alpha0\sigma}^* &= \bar{g}_{\alpha\sigma} + \bar{d}_{\alpha\sigma} + \bar{c}_{\alpha\sigma}\norm{\dot{\chi}_d}^2,\\
K_{\alpha1\sigma}^* &= 2\bar{c}_{\alpha\sigma}\norm{\dot{\chi}_d},~
K_{\alpha2\sigma}^* = \bar{c}_{\alpha\sigma}, \\
K_{\alpha3\sigma}^* &=  \norm{\boldsymbol M_{\alpha\alpha\sigma} - \bar{\boldsymbol{M}}_{\alpha\alpha\sigma} }+ \norm{\boldsymbol M_{p\alpha\sigma}}+ \norm{\boldsymbol M _{q\alpha\sigma}},
\end{align*}

Based on the upper bound structure in (\ref{up_boundd_j}), the gain $\rho_{j\sigma}$ in (\ref{del_j}) is designed as
\begin{equation}
 \small
\rho_{j\sigma} = \hat{K}_{j0\sigma} + \hat{K}_{j1\sigma}\norm{\xi} + \hat{K}_{j2\sigma}\norm{\xi}^2 + \hat{K}_{j3\sigma}\norm{\ddot{\chi}} + \zeta_{j\sigma} + \gamma_{j\sigma}, \label{rho_j} 
\end{equation}
where $\hat{K}_{ji\sigma}$ are the estimates of $K_{ji\sigma}^*$ $i=0,1,2,3$, and $\zeta_{j\sigma}$, $\gamma_{j\sigma}$ are auxiliary gains used for closed-loop stabilization (cf. Remark \ref{remark_zeta_gamma}). 
It can be observed from \eqref{ct} that each $\sigma(t)$ has its own set of fixed and adaptive gains. Let $\sigma$ denote the index for the active regime at time $t$ and $\bar{\sigma}(t)$ denote the index for the inactive regime at time $t$: for example, when $\sigma(t)=1$, we have $\bar{\sigma}(t)=2$ and vice versa. 
Accordingly, the gains are adapted via the following laws:
\begin{subequations}\label{adaptive_law_j}
\begin{align}
&\text{\textbf{Active regime:}}\nonumber\\
&\dot{\hat{K}}_{ji\sigma} = \norm{s_{j}}\norm{\xi}^i - \nu_{ji\sigma} \hat{K}_{ji\sigma},\quad ~i = 0,1,2 \label{adaptive_law_j1}\\
&\dot{\hat{K}}_{j3\sigma} = \norm{s_{j}}\norm{\ddot{\chi}} - \nu_{j3\sigma} \hat{K}_{j3\sigma},\label{adaptive_law_j2}\\
&\dot{\zeta}_{j\sigma} = \begin{cases}
   0, ~~~~~~~~~~~~~~~~~~~~ \text{if } \norm{s_j} \geq \varpi_j,  \\
     -(  1 + (\hat{K}_{j3\sigma}\norm{\ddot{\chi}}+ \sum_{i=0}^{2}\hat{K}_{ji\sigma}\norm{\xi}^i )\norm{s_j} ) \zeta_{j\sigma}  \\
  ~~~ + ~~ \delta_{j\sigma}  ,~~~~~~~~~ \text{if } \norm{s_j} < \varpi_j,
\end{cases} \nonumber \\
&~\dot{\gamma}_{j\sigma} = 0, ~\hat{K}_{ji\sigma}(0) > 0, ~\zeta_{j\sigma} (0)  > 0,  \gamma_{j\sigma} (0) >0,\\
&\text{\textbf{Inactive regime:}}\nonumber\\
&\dot{\hat{K}}_{ji\bar{\sigma}} = ~\dot{\hat{K}}_{j3\bar{\sigma}} =  \dot{\zeta}_{j\bar{\sigma}} = 0,\quad ~i = 0,1,2\label{adaptive_law_j3}\\
&  ~\dot{\gamma}_{j\bar{\sigma}} = -\left(1+ \frac{\varrho_{j\sigma}}{2}\sum \limits_{i=0}^{3} {\hat{K}_{ji\bar{\sigma}}}^2 \right) \gamma_{j\bar{\sigma}} +\epsilon_{j\bar{\sigma}}, \label{adaptive_law_j4}\\ 
&\hat{K}_{ji\bar{\sigma}}(0) > 0, {\zeta}_{j \bar{\sigma}}(0) > 0, {\gamma}_{j \bar{\sigma}}(0) > 0, \label{init}
\end{align}
\end{subequations}
where $\nu_{ji\sigma},\delta_{j\sigma}, \epsilon_{j\bar{\sigma}}  \in\mathbb{R}^{+}$ are user-defined scalars and $\varrho_{j\sigma} \triangleq \frac{
\min \lbrace \lambda_{\min}( \boldsymbol  \Lambda_{j\sigma} ), {\nu_{ji\sigma}/2} ) 
\rbrace}{\max \lbrace  \lambda_{\max}(\bar{\boldsymbol M}_{jj\sigma}), 1/2 \rbrace}$ with $i=0,1,2,3$.

\subsection{ADT-based class of stabilizing switching signals} \label{sect_sw}

Let us define the following positive design scalars
\begin{subequations}\label{ADT_gains}\small
\begin{align}
& {\bar{\varrho}_j} \triangleq \max_{\sigma \in \Omega} \{\lambda_{\max} (\bar{\boldsymbol M}_{jj\sigma})\}  ,~ {\ushort{\varrho}_{j}} \triangleq \min_{\sigma \in \Omega} \{\lambda_{\min} (\bar{\boldsymbol M}_{jj\sigma})\} , \\
& \mu_{j}  \triangleq {\bar{\varrho}_{j}} / \ushort{\varrho}_{j}, ~~~ \mu  = \max(\mu_{p},\mu_{q}, \mu_{\alpha}), \label{mu} \\
&\text{and}~ 0 < \kappa < \varrho,~\text{with}~
\varrho =
\min_{j\in\{p,q,\alpha\},\,\sigma\in\Omega}
\left\{\varrho_{j\sigma}\right\}.
\label{mu_def}
\end{align}
\end{subequations}

Based on the above parameters, consider the ADT threshold
\begin{equation}
\vartheta
\triangleq
\frac{\ln\mu}{\kappa} <  \frac{T_2-T_1}{N_\sigma(\cdot)}. 
\label{eq:sw_law}
\end{equation}

\begin{theorem}
Under Properties \ref{prop_1}-\ref{prop_2}, the control laws~(\ref{ct}) and adaptive
laws~(\ref{adaptive_law_j}) make the closed-loop trajectories in~(\ref{eq:s_j}) Uniformly Ultimately
Bounded (UUB) for any switching signal that satisfies the average dwell time condition
in (\ref{eq:sw_law}).
\end{theorem}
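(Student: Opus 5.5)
I would use a multiple-Lyapunov-function argument with one candidate per regime, summed over the three subsystems. For the active regime $\sigma$ I would take
\begin{align*}
V_\sigma=\sum_{j\in\{p,q,\alpha\}}\Big(\tfrac12 s_j^{\top}\bar{\boldsymbol M}_{jj\sigma}s_j+\sum_{l\in\Omega}\Big(\sum_{i=0}^{3}\tfrac12(\hat K_{jil}-K^*_{jil})^2+\tfrac12\zeta_{jl}^2+\tfrac12\gamma_{jl}^2\Big)\Big),
\end{align*}
where I include the estimates of both the active and the inactive regime. The inactive ones are frozen by (\ref{adaptive_law_j3}), so they must still be accounted for.

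The first step is boundedness and positivity of the gains. From (\ref{adaptive_law_j}) the $\hat K_{ji\sigma}$, $\zeta_{j\sigma}$ and $\gamma_{j\sigma}$ all have the form $\dot x=-a(t)x+b(t)$ with $a>0$ and $b\ge0$, and positive initial conditions. Hence they remain positive on every interval, including across switches, since they are frozen when inactive. This makes $\rho_{j\sigma}\ge0$ and lets me drop sign issues.

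The second step is differentiating $V_\sigma$ on an interval $[t_k,t_{k+1})$ with $\sigma$ fixed. Using (\ref{eq:sj_dot}), (\ref{ct}) and (\ref{up_boundd_j}),
\begin{align*}
s_j^{\top}\bar{\boldsymbol M}_{jj\sigma}\dot s_j\le -s_j^{\top}\boldsymbol\Lambda_{j\sigma}s_j-s_j^{\top}\Delta\tau_{j\sigma}+\norm{s_j}\Big(K^*_{j3\sigma}\norm{\ddot\chi}+\sum_{i=0}^{2} K^*_{ji\sigma}\norm{\xi}^i\Big).
\end{align*}
I would split into the two cases $\norm{s_j}\ge\varpi_j$ and $\norm{s_j}<\varpi_j$.

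In the first case, $-s_j^{\top}\Delta\tau_{j\sigma}=-\rho_{j\sigma}\norm{s_j}$. The adaptive terms $\norm{s_j}\norm{\xi}^i$ in (\ref{adaptive_law_j1})–(\ref{adaptive_law_j2}) cancel the $(\hat K-K^*)$ cross terms. The leakage produces
\begin{align*}
-\nu\,(\hat K-K^*)\hat K\le-\tfrac{\nu}{2}(\hat K-K^*)^2+\tfrac{\nu}{2}K^{*2},
\end{align*}
and the $-(\zeta+\gamma)\norm{s_j}$ terms are nonpositive.

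In the second case, $-s_j^{\top}\Delta\tau_{j\sigma}=-\rho_{j\sigma}\norm{s_j}^2/\varpi_j$ is only quadratic. I bound the leftover by $\rho_{j\sigma}\norm{s_j}\le\rho_{j\sigma}\varpi_j$, i.e. by terms of the form $\hat K(\cdot)\norm{s_j}$. These are exactly what the $\zeta$-law's damping $-(\hat K\ldots)\norm{s_j}\zeta^2$ and the constant $\delta_{j\sigma}\zeta$ are meant to absorb after Young's inequality. The $-\zeta^2$ term supplies the decay, and the residual is a constant depending on $\varpi_j,\delta_{j\sigma}$.

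For the inactive-regime terms, the frozen $\hat K_{ji\bar\sigma}$ contribute nothing to $\dot V$, but they have no decay. This is the role of $\gamma_{j\bar\sigma}$. Its law (\ref{adaptive_law_j4}) gives
\begin{align*}
\gamma\dot\gamma\le-\gamma^2-\tfrac{\varrho_{j\sigma}}{2}\sum_{i}\hat K_{ji\bar\sigma}^2\gamma^2+\epsilon\gamma.
\end{align*}
I would use this to manufacture a term $-\varrho_{j\sigma}\cdot\tfrac12(\hat K_{ji\bar\sigma}-K^*)^2$. This step is the main obstacle: the weighting looks designed so that an inequality like $\hat K^2\gamma^2\gtrsim(\hat K-K^*)^2$ holds up to constants, but it is not literally true when $\gamma$ is small. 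I would first try bounding $(\hat K-K^*)^2\le2\hat K^2+2K^{*2}$ and push the $K^*$ part into a constant, accepting a constant residual. Collecting everything, with $\varrho_{j\sigma}$ defined precisely to compare the decay rates with the quadratic form $\tfrac12\lambda_{\max}(\bar{\boldsymbol M}_{jj\sigma})\norm{s_j}^2$, I aim for
\begin{align*}
\dot V_\sigma\le-\varrho V_\sigma+\mathcal C
\end{align*}
on each interval, with $\mathcal C$ a finite constant.

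The third step handles the switching instants. The parameter parts of $V$ are continuous and identical in $V_1$ and $V_2$. Only the $s_j$ part jumps, and since $s_j$ is continuous,
\begin{align*}
V_{\sigma(t_k)}(t_k)\le\mu\,V_{\sigma(t_k^-)}(t_k^-),
\end{align*}
with $\mu\ge1$ as in (\ref{mu}). Using $\kappa<\varrho$ and iterating the comparison lemma over the $N_\sigma(0,t)$ switches gives
\begin{align*}
V(t)\le \mu^{N_0}\,e^{-(\varrho-\ln\mu/\vartheta)t}\,V(0)+\text{const}.
\end{align*}
The geometric series of the constant terms converges because the ADT condition (\ref{eq:sw_law}) ensures $\varrho-\ln\mu/\vartheta\ge\varrho-\kappa>0$. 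Boundedness of $V$ then gives bounded $s_j$ and adaptive gains, hence bounded $\xi$ via (\ref{eq:s_j}), and an explicit ultimate bound, which is UUB.
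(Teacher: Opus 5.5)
\textbf{Verdict: there is a genuine gap.} Your skeleton matches the paper's. You sum the subsystem Lyapunov functions and carry the frozen inactive-regime estimates. You split on the boundary layer, take a jump factor $\mu$ from the $s_j$-part only, and iterate under ADT; your version of the iteration with an explicit bias term is fine. The gap is the unit-weighted quadratic terms $\tfrac12\zeta_{jl}^2$, $\tfrac12\gamma_{jl}^2$.

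In the case $\norm{s_j}<\varpi_j$, the cross terms cancel and leave $T_j\triangleq(\hat K_{j3\sigma}\norm{\ddot\chi}+\sum_{i=0}^{2}\hat K_{ji\sigma}\norm{\xi}^i)\norm{s_j}\ge0$. With your $V$, the $\zeta$-law contributes only $-(1+T_j)\zeta_{j\sigma}^2+\delta_{j\sigma}\zeta_{j\sigma}$. Since $\zeta_{j\sigma}\le\max\{\zeta_{j\sigma}(0),\delta_{j\sigma}\}$, which is below $1$ for the experimental parameters, the net $T_j(1-\zeta_{j\sigma}^2)$ is positive. It is also not a constant. The bound $\norm{s_j}<\varpi_j$ only gives $T_j\le\rho_{j\sigma}\varpi_j$, and $\rho_{j\sigma}$ contains $\norm{\ddot\chi}$ and the full $\norm{\xi}$ over all subsystems, neither of which is bounded a priori. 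No Young-type splitting helps, because the offending term and the damping term have the same structure and differ only by the factor $\zeta_{j\sigma}^2$. So the constant residual you claim in that case is not obtained.

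The $\gamma$ obstacle you flagged is the same phenomenon. Your fallback $(\hat K-K^*)^2\le 2\hat K^2+2K^{*2}$ does not remove the small multiplier $\gamma_{j\bar\sigma}^2$ in front of the decay. Hence $\dot V_\sigma\le-\varrho V_\sigma+\mathcal C$ does not follow from your $V$.

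\textbf{Missing idea.} Use uniform \emph{positive lower} bounds on the auxiliary gains and normalize by them. The paper takes $0<\ushort{\zeta}_{j\sigma}\le\zeta_{j\sigma}<\bar\zeta_{j\sigma}$ and $0<\ushort{\gamma}_{j\sigma}\le\gamma_{j\sigma}<\bar\gamma_{j\sigma}$ from the structure of the adaptive laws. It then puts $\zeta_{j\sigma}/\ushort{\zeta}_j+\gamma_{j\sigma}/\ushort{\gamma}_j$ \emph{linearly} into $V_j$. A quadratic $\zeta^2/(2\ushort{\zeta}_j^2)$ would also work; what matters is the normalization.

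With this choice the two cancellations go through:
\begin{itemize}
\item Since $\zeta_{j\sigma}/\ushort{\zeta}_j\ge1$, you get $\dot\zeta_{j\sigma}/\ushort{\zeta}_j\le-T_j+\delta_{j\sigma}/\ushort{\zeta}_j$, which cancels $T_j$ exactly.
\item Since $\gamma_{j\bar\sigma}/\ushort{\gamma}_j\ge1$, you get $\dot\gamma_{j\bar\sigma}/\ushort{\gamma}_j\le-\tfrac{\varrho_{j\sigma}}{2}\sum_{i}\hat K_{ji\bar\sigma}^2+\epsilon_{j\bar\sigma}/\ushort{\gamma}_j$, with no small factor.
\item Because $\hat K_{ji\bar\sigma}\ge0$, you can use $(\hat K-K^*)^2\le\hat K^2+K^{*2}$, which is sharper than your bound (yours would halve the rate). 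This gives decay of the frozen estimation errors at rate $\varrho_{j\sigma}$, plus a constant.
\end{itemize}

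You need the upper bounds as well. Every auxiliary-gain term in $V_j$ whose own dynamics gives no usable decay on the current interval must be handled separately. These are the frozen gains, and the inactive $\gamma_{j\bar\sigma}$, whose decay is spent on the $\hat K_{ji\bar\sigma}$. Each is handled by adding and subtracting $\varrho_{j\sigma}$ times its upper bound. This is where the $\varrho_{j\sigma}\bar\zeta_{j\sigma}/\ushort{\zeta}_j$ and $\varrho_{j\sigma}\bar\gamma_{j\bar\sigma}/\ushort{\gamma}_j$ terms in $\Delta$ come from.

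Your first step only establishes positivity, not these lower and upper bounds. With the bounds and the normalization in place, the rest of your argument goes through.
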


\textit{Proof:}
 See Appendix.
 
\begin{figure*}[tbh!]
    \centering
\includegraphics[width=\linewidth, height=3.0in]{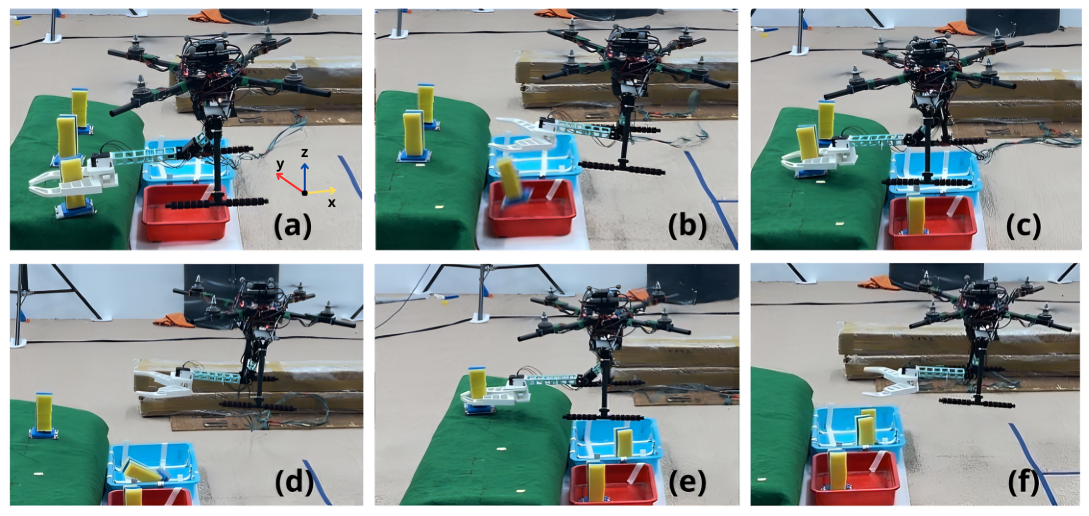} 
    \vspace{-20pt}
    \caption{\footnotesize Sequence of operations of the aerial manipulator during the experiment with the proposed controller: (a) takeoff from the ground and pick the first payload; (b) drop the first payload; (c) move and pick the second payload; (d) drop the second payload; (e) pick the third payload (f) drop the third payload.}
    \label{fig:exp_snap_1}
\end{figure*}

{\color{black}

 \begin{remark}[Guidelines to select $\vartheta$] \label{remark_switching_law}
In practice, some knowledge of the average number of regime changes $N_\sigma(T_1, T_2)$ in an interval $T_2-T_1$ is available, resulting in some ADT threshold: in the scenario considered in this work, it depends on the frequency for picking/dropping a payload. Then, in order to guarantee stable manipulation under such average number of system changes, the various gain parameters should be designed so that $\mu$ in (\ref{mu}) and $\varrho$ in (\ref{mu_def}) result in $\vartheta$ smaller than the ADT threshold, i.e., (\ref{eq:sw_law}) is validated.
\end{remark}

\begin{remark}[Choice of gains and trade-offs] \label{remark_gain_choice}
It can be noted from (\ref{mu_def}) and (\ref{new}) that high values of ($\boldsymbol  \Lambda_{p\sigma}, \boldsymbol  \Lambda_{q\sigma}, \boldsymbol  \Lambda_{\alpha\sigma}$) and of ($\nu_{p i\sigma}, \nu_{q i\sigma}, \nu_{\alpha i\sigma}$) can lead to faster error convergence, at the cost of higher control input, see (\ref{tau_j}). Therefore, the gains must be selected as per application requirement balancing between performance and control input demand. The matrices $\bar{\boldsymbol{M}}_{pp\sigma}, \bar{\boldsymbol{M}}_{qq\sigma}, \bar{\boldsymbol{M}}_{\alpha\alpha\sigma}$ can be selected based on some knowledge of the terms in the diagonal of $\boldsymbol{M}$. Because these diagonal terms represent the inertia of the individual subsystems and not the inertial couplings, they are much easier to model \cite{arleo2013control}.
\end{remark}

}

\begin{algorithm}[!h]
 \caption{ {Design steps of the proposed controller}}
 \textbf{Offline Parameter Design Phase:}
 \begin{itemize}
     \item \textbf{Design matrix gains:} Define the positive definite gain matrices $(\boldsymbol \Phi_p,\bar{\boldsymbol M}_{pp\sigma}, \boldsymbol \Lambda_{p\sigma})$, $(\boldsymbol \Phi_q, \bar{\boldsymbol M}_{qq\sigma}, \boldsymbol \Lambda_{q\sigma})$, and $(\boldsymbol \Phi_\alpha, \bar{\boldsymbol M}_{\alpha \alpha \sigma}, \boldsymbol \Lambda_{\alpha\sigma})$. 
     
     \item \textbf{Design class of switching signals:} Following Remark \ref{remark_switching_law}, calculate $\vartheta $ as 
per \eqref{eq:sw_law}. Stability is guaranteed for any switching signal belonging to such ADT switching. 

\item \textbf{Define gains for adaptive laws:} \textbf{(i)}: Define gains $\nu_{pi\sigma}$, $\nu_{qi\sigma}$ and 
{\color{black} $\nu_{\alpha i \sigma}$}, $i = 0, 1, 2, 3$.  
\textbf{(ii)}: Define $\delta_{p\sigma}$, $\delta_{q\sigma}$, $\delta_{\alpha \sigma}$. 
\textbf{(iii)}: Design $\epsilon_{p\sigma}$, $\epsilon_{q\sigma}$ $\epsilon_{\alpha \sigma}$ and derive {\color{black} $\varrho_{p\sigma}$, $\varrho_{q\sigma}$, $\varrho_{\alpha \sigma}$.} 
 \end{itemize}

\textbf{Online Control Phase:}
\begin{itemize}
    \item \textbf{Evaluate error variables:} Based on the design parameters in Step 1, calculate $s_p$, $s_q$, $s_\alpha$ according to \eqref{eq:s_j}. Also calculate $\xi$ as in \eqref{err}. 

\item \textbf{Design switching signal:} Evaluate the switching signal $\sigma(t)$ based on the current operational regime. 

    \item  \textbf{Evaluate adaptive gains:} As per \eqref{adaptive_law_j} and based on the current $\sigma(t)$, calculate the following adaptive gains \textbf{(i)} ($\hat{K}_{pi\sigma}$, $\hat{K}_{qi\sigma}$ $\hat{K}_{\alpha i\sigma}$), $({\zeta}_{p \sigma}$, ${\zeta}_{q \sigma}$ ${\zeta}_{\alpha \sigma})$ and  (${\gamma}_{p\sigma}$, ${\gamma}_{q\sigma}$ ${\gamma}_{\alpha \sigma}$), $i = 0, 1, 2, 3$
for the active regime and \textbf{(ii)} ($\hat{K}_{pi \bar\sigma}$, $\hat{K}_{qi \bar\sigma}$ $\hat{K}_{\alpha i \bar\sigma}$), $({\zeta}_{p \bar\sigma}$, ${\zeta}_{q \bar\sigma}$ ${\zeta}_{\alpha \bar\sigma})$ and  (${\gamma}_{p \bar\sigma}$, ${\gamma}_{q \bar\sigma}$ ${\gamma}_{\alpha \bar\sigma}$), $i = 0, 1, 2, 3$ for the inactive regime.

\item \textbf{Evaluate control inputs:} Based on the current $\sigma(t)$, calculate the control inputs $\tau_{p\sigma}$, $\tau_{q\sigma}$ and $\tau_{\alpha\sigma}$ via \eqref{tau_j}. 
\end{itemize}

 \end{algorithm}

The design steps of the proposed switched adaptive controller are summarized in Algorithm 1.
{\color{black} 
For tuning purposes, the following sequential tuning procedure can be adopted. First, the term in (\ref{del_j}) can be disabled ($\Delta\tau_{j\sigma}=0$), and $\bar{\boldsymbol M}_{jj\sigma}$ can be set from some nominal knowledge as in Remark \ref{remark_gain_choice}. The resulting nominal controller 
\[
\tau_{j\sigma} =
-\boldsymbol\Lambda_{j\sigma}\boldsymbol\Phi_j e_j
-\left(\boldsymbol\Lambda_{j\sigma}
+\bar{\boldsymbol M}_{jj\sigma}\boldsymbol\Phi_j\right)\dot e_j
+\bar{\boldsymbol M}_{jj\sigma}\ddot j_d,
\]
is then in a classical PD form with gains
\[
K_{p,j\sigma}=\boldsymbol\Lambda_{j\sigma}\boldsymbol\Phi_j,
\qquad
K_{d,j\sigma}=\boldsymbol\Lambda_{j\sigma}
+\bar{\boldsymbol M}_{jj\sigma}\boldsymbol\Phi_j,
\]
for which standard PD tuning rules apply. After tuning the nominal gains, the adaptive parameters can be enabled and moderately tuned.
}

\section{Experimental Results and Analysis}
For experimental purpose, an aerial manipulator setup is created using a Tarrot-650 quadrotor frame {\color{black} (with KV380 SunnySky V4006 brushless motors; 14 inch propeller; 6S battery) } and a 2R serial-link manipulator system (with Dynamixel XM430-W210-T motors). The end-effector is a custom 3-D printed gripper actuated by a Dynamixel motor for payload pick-and-place operation. The overall setup weighs $3$kg (approx). A U2D2 Power Hub Board is used to power the manipulator and the gripper. Raspberry Pi-4 is used as a processing unit which uses a U2D2 communication converter.  Sensor data from OptiTrack motion capture system and IMU were used to measure the necessary pose (position, attitude), velocity and acceleration feedback of the quadrotor. For the manipulator, the joint angular position and velocity are measured by the Dynamixel motors, while the accelerations are computed numerically.

To achieve attitude control, the tracking error in orientation is defined as~\cite{mellinger2011minimum}:
\begin{align}
    e_q = \frac{1}{2} \left( R_d^{\top} R_B^W - (R_B^W)^{\top} R_d \right)^{\vee}, 
    \dot{e}_q = \dot{q} - R_d^{\top} R_B^W \dot{q}_d, \nonumber
\end{align}
where $(\cdot)^{\vee}$ denotes the \emph{vee} operator that maps an element of $SO(3)$ to $\mathbb{R}^3$.
The matrix $R_d \in SO(3)$ represents the desired rotation evaluated at $(\phi_d, \theta_d, \psi_d)$,
and $q_d$ is the corresponding desired attitude as defined in~\cite{mellinger2011minimum}.
{\color{black}
For implementation, velocity and acceleration signals are low-pass filtered to reduce IMU/OptiTrack noise and numerical differentiation errors. The leakage terms parameter $\nu_{ji\sigma}$ and the boundary-layer parameter $\varpi_j$ regulate the trade-off between adaptation speed, noise sensitivity, and
chattering attenuation: namely,  $-\nu_{ji\sigma}\hat K_{ji\sigma}$ prevent excessive adaptive-gain growth, while $\varpi_j$ in $\Delta\tau_{j\sigma}$ mitigates chattering near $\|s_j\|=0$. The parameters used in the experiments for the proposed controller are listed in Table \ref{tab:control_parameters}.

The desired quadrotor and manipulator trajectories are pre-computed using \textit{mav\_trajectory\_generation}~\cite{mav_trajectory_generation} and \textit{Ruckig}~\cite{berscheid2021jerk}, respectively. These tools generate smooth, kinematically feasible references from waypoints and motion limits without requiring the aerial-manipulator dynamic model.
}

{\color{black}
Since off-center payloads can induce additional wrenches, the experiments were conducted by selecting payload masses, arm configurations, and reference trajectories that keep the required force/moment within the available actuation capabilities.

}

\subsection{Experimental Scenario and Parameter Selection}
To verify the performance of the proposed controller, an experimental scenario was created where the aerial manipulator is tasked to pick up three different payloads (of weight approx. $0.3$, $0.4$, $0.5$kg) from a location and drop it to another designated place, sequentially (cf. Fig. \ref{fig:exp_snap_1}). The payload positions were fed to the quadrotor using the motion capture markers. The experimental scenario consists of the following sequence:
\begin{enumerate}[label=\alph*)]
    \item The quadrotor takes off from its origin ($x = 0, y = 0$) and ascends to a desired height of $z_d=1$m. Initially, the manipulator link angles are set to $\alpha_1 (0) = \ang{0}$ and $\alpha_2(0) = \ang{90}$. 
    \item The quadrotor moves towards the first payload location $x = -1,y = -0.5$ while the manipulator links rotate to the desired angles $\alpha_1^d = \alpha_2^d = \ang{45}$ to pick the first payload of $0.3$kg (cf. Fig. \ref{fig:exp_snap_1}(a)). 
    The quadrotor moves backward to the position $x = 0.6, y = -0.5$, with the manipulator adjusting to the desired angles $\alpha_1^d = \ang{0}, \alpha_2^d = \ang{45}$ and then drop the first payload (cf. Fig. \ref{fig:exp_snap_1}(b)). Note that different arm positions during payload pick-up and drop give different inertia distributions during regime changes $\sigma=1 \mapsto 2$ and $\sigma=2 \mapsto 1$.
    \item Following a similar procedure, the quadrotor retrieves the second payload of $0.4$kg from $x = -1, y = 0.0$ (cf. Fig. \ref{fig:exp_snap_1}(c)) to drop at $x = 0.6, y = 0.0$ (cf. Fig. \ref{fig:exp_snap_1}(d)) and the third payload of $0.5$kg from $x = -1, y = 0.5$ (cf. Fig. \ref{fig:exp_snap_1}(e)) to drop at $x = 0.6, y = 0.5$ (cf. Fig. \ref{fig:exp_snap_1}(f)). The manipulator follows the same desired link angles used for the first payload for the other two payloads.
\end{enumerate}
The above sequence corresponds to a switching signal $\sigma(t)$ as in Fig. \ref{switching}, where the transitions from $\sigma=1$ to $\sigma=2$ and $\sigma=2$ to $\sigma=1$ indicate instances of payload pick-up and drop respectively. 

\begin{figure}[!t]
\includegraphics[width=0.4\textwidth, height=1.8in]{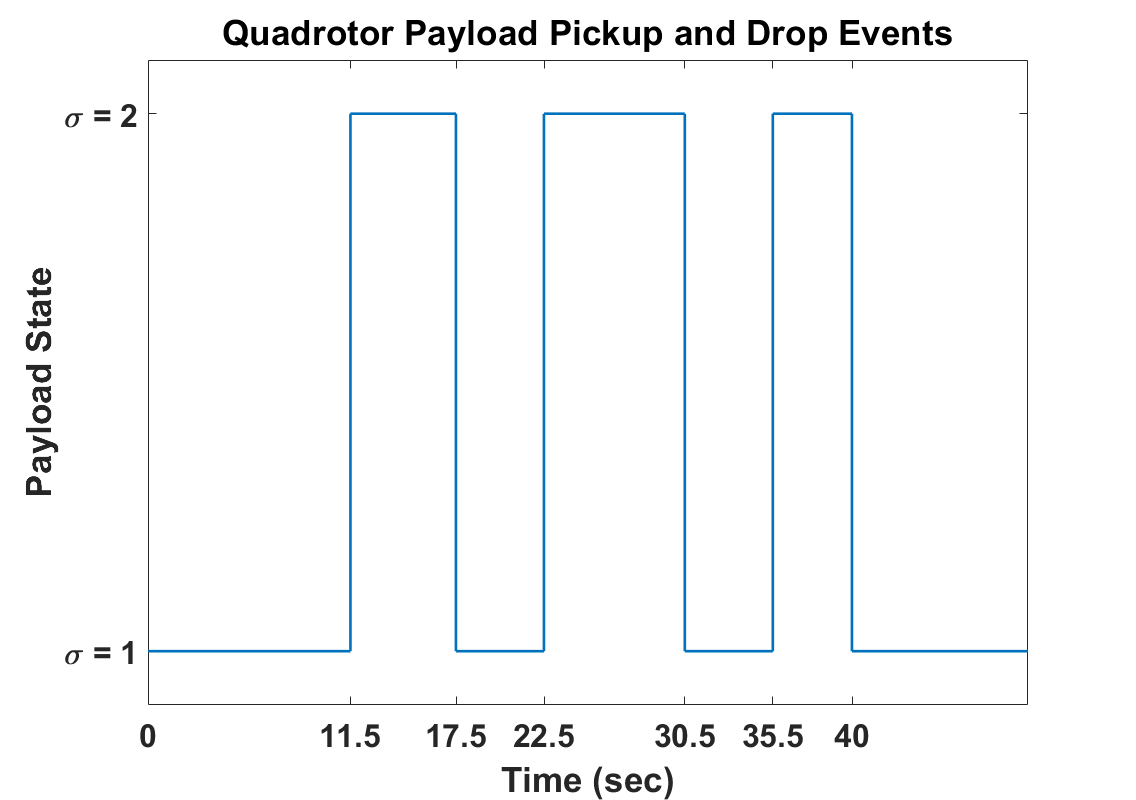}
    \centering
    \caption{The switching regimes used in both the scenarios.}
    \label{switching}
\end{figure}

To verify the significance of the proposed switched adaptive control, we compare it against two baseline non-switched controllers: (i) a non-switched adaptive control (abbreviated as NS-AC) version of the proposed scheme, obtained by only considering the control for regime $\sigma=1$, and (ii) a disturbance observer-based controller (abbreviated as DOC).

The selected gains yield $\mu=\max\{1.5,1.667,2\}=2$ and $\kappa=0.2$,
and therefore the required ADT threshold is 
$\vartheta
=
\frac{\ln\mu}{\kappa}
=
3.47\mathrm{s}$.
The six payload pick-up and release transitions over the $50$s
experimental horizon give the empirical average inter-switching time $
{\vartheta}_{\mathrm{actual}}=\frac{T_2-T_1}{N_\sigma(T_1,T_2)}=\frac{50}{6}=8.33\mathrm{s}$.
Since
$\vartheta < {\vartheta}_{\mathrm{actual}}$,
the experimental switching sequence satisfies the considered
ADT-based sufficient stability condition.

The switching signal in Fig. \ref{switching} belongs to the class of switching signals for which the system can be stabilized. The control parameters and estimated coupling parameters required for DOC are selected according to the disturbance model presented in \cite{10722859} after considering nominal parametric information from the system (e.g., mass, geometry, etc.). Note that NS-AC and the proposed controller do not require any prior knowledge of inertial couplings. Hence, this comparative study allows to observe the effects on control performance of the switching dynamics and of the uncertainty in inertial couplings.

\begin{table}[htbp]
\centering
\caption{ Design Parameters for the Proposed Controller}
\begin{tabular}{|p{0.45\textwidth}|}
\hline
\textbf{ Quadrotor Position Control} \\
\hline
$\bar{\boldsymbol{M}}_{pp0} = \boldsymbol I$, $\bar{\boldsymbol{M}}_{pp1} = 1.5 \boldsymbol I$, ${\boldsymbol \Phi}_{p} = \text{diag}\{1.5, 1.5, 1.1 \}$,  ${\boldsymbol \Lambda}_{p0} = \text{diag}\{4, 4, 4 \}$, ${\boldsymbol \Lambda}_{p1} = \text{diag}\{5, 5, 6 \}$

 $\nu_{p00} = \nu_{p10} = \nu_{p20} = \nu_{p30} = \nu_{p01} = \nu_{p11} = \nu_{p21} = \nu_{p31} = 2.0$, ${\delta}_{p0} = {\delta}_{p1}  = {\epsilon}_{p0} = {\epsilon}_{p1}  = 0.001$,

 $\hat{K}_{p00}(0) = \hat{K}_{p10}(0) = \hat{K}_{p20}(0) = \hat{K}_{p30}(0) = \hat{K}_{p01}(0) = \hat{K}_{p11}(0) = \hat{K}_{p21}(0) = \hat{K}_{p31}(0)= 0.01$, $\zeta_{p0}(0) = \zeta_{p1}(0) = \gamma_{p0}(0) = \gamma_{p1}(0) = 0.1$, 
 
 $\varpi_p = 0.1$ \\
\hline
\hline
\textbf{ Quadrotor Attitude Control} \\
\hline
$\bar{\boldsymbol{M}}_{qq0} = 0.015 \boldsymbol I$, $\bar{\boldsymbol{M}}_{qq1} = 0.025 \boldsymbol I$, ${\boldsymbol \Phi}_{q} = \text{diag}\{5, 5, 4 \}$,  ${\boldsymbol \Lambda}_{q0} = \text{diag}\{1.2, 1.2, 0.20 \}$, ${\boldsymbol \Lambda}_{q1} = \text{diag}\{1.4, 1.4, 0.25 \}$

 $\nu_{q00} = \nu_{q10} = \nu_{q20} = \nu_{q30} = \nu_{q01} = \nu_{q11} = \nu_{q21} = \nu_{q31} = 10.0$, ${\delta}_{q0} = {\delta}_{q1}  = {\epsilon}_{q0} = {\epsilon}_{q1}  = 0.0001$,

 $\hat{K}_{q00}(0) = \hat{K}_{q10}(0) = \hat{K}_{q20}(0) = \hat{K}_{q30}(0) = \hat{K}_{q01}(0) = \hat{K}_{q11}(0) = \hat{K}_{q21}(0) = \hat{K}_{q31}(0)= 0.001$, 
 {\color{black} $\zeta_{q0}(0) = \zeta_{q1}(0) = \gamma_{q0}(0) = \gamma_{q1}(0) = 0.01$, }
 
 $\varpi_q = 1.0$\\
\hline
\hline
\textbf{ Manipulator Control} \\
\hline
$\bar{\boldsymbol{M}}_{\alpha\alpha0} = 0.05 \boldsymbol I$, 
{\color{black}$\bar{\boldsymbol{M}}_{\alpha\alpha1} = 0.1 \boldsymbol I$, }
${\boldsymbol \Phi}_{\alpha} = 1.5 \boldsymbol I$,  ${\boldsymbol \Lambda}_{\alpha0} = \boldsymbol I$, ${\boldsymbol \Lambda}_{\alpha1} = 2 \boldsymbol I$,

 $\nu_{\alpha00} = \nu_{\alpha10} = \nu_{\alpha20} = \nu_{\alpha30} = \nu_{\alpha01} = \nu_{\alpha11} = \nu_{\alpha21} = \nu_{\alpha31} = 5.0$, ${\delta}_{\alpha0} = {\delta}_{\alpha1}  = {\epsilon}_{\alpha0} = {\epsilon}_{\alpha1}  = 0.0001$,

 $\hat{K}_{\alpha00}(0) = \hat{K}_{\alpha10}(0) = \hat{K}_{\alpha20}(0) = \hat{K}_{\alpha30}(0) = \hat{K}_{\alpha01}(0) = \hat{K}_{\alpha11}(0) = \hat{K}_{\alpha21}(0) = \hat{K}_{\alpha31}(0)= 0.001$, $\zeta_{\alpha0}(0) = \zeta_{\alpha1}(0) = \gamma_{\alpha0}(0) = \gamma_{\alpha1}(0) = 0.01$, 
 
 $\varpi_\alpha = 1.0$\\
\hline
\end{tabular}
\label{tab:control_parameters}
\end{table}

\subsection{Results and Analysis}
\begin{figure}[!h]
\includegraphics[width=0.5\textwidth, height=2.8in]{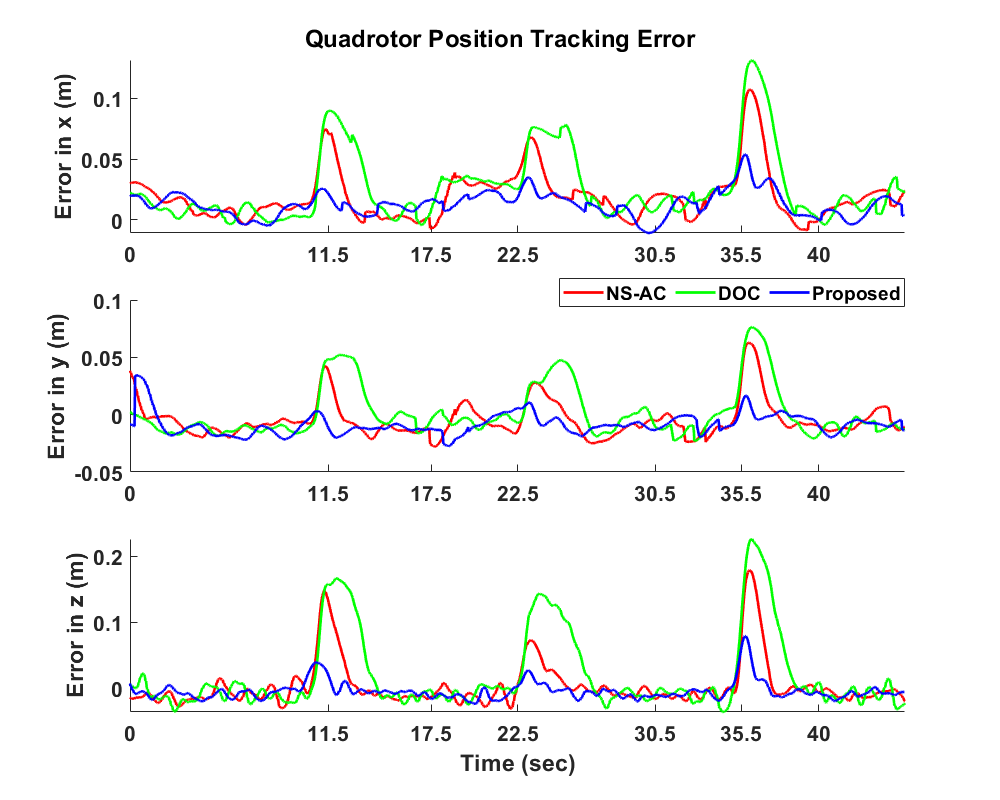}
    \centering
    \vspace{-15pt}
    \caption{Comparison of quadrotor position tracking error. }
    \label{fig1}
\end{figure}

\begin{figure}[!h]
\includegraphics[width=0.5\textwidth, height=2.8in]{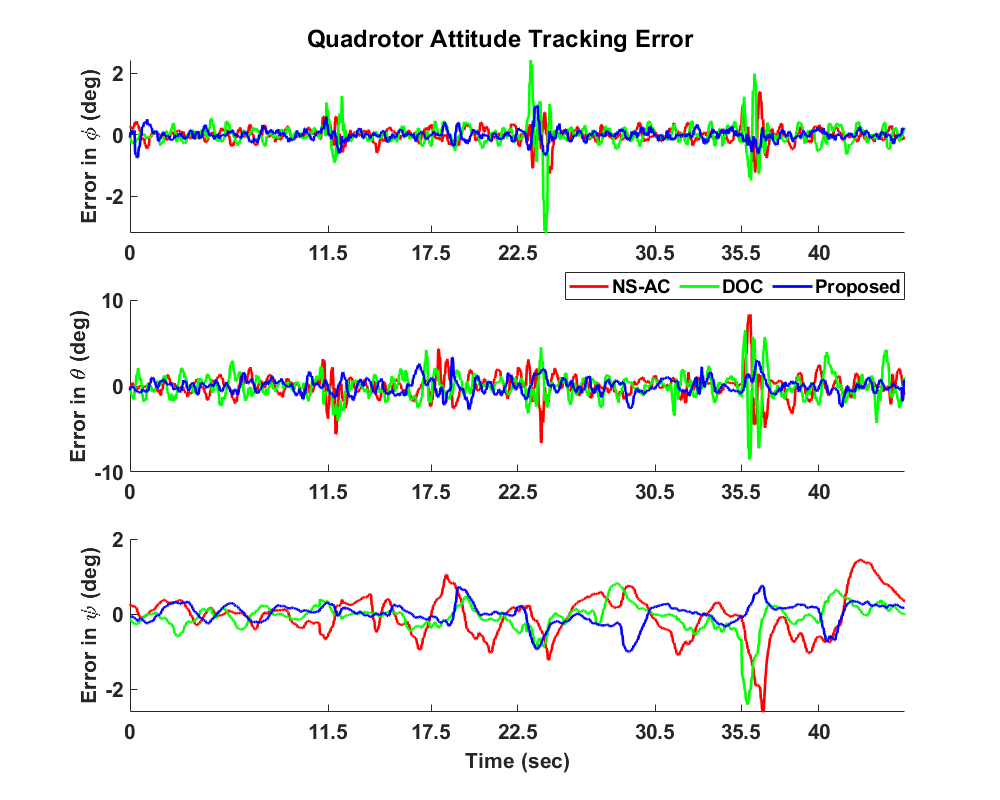}
    \centering
    \caption{Comparison of quadrotor attitude tracking error. }
    \label{fig2}
\end{figure}

\begin{figure}[!h]
\includegraphics[width=0.48\textwidth]{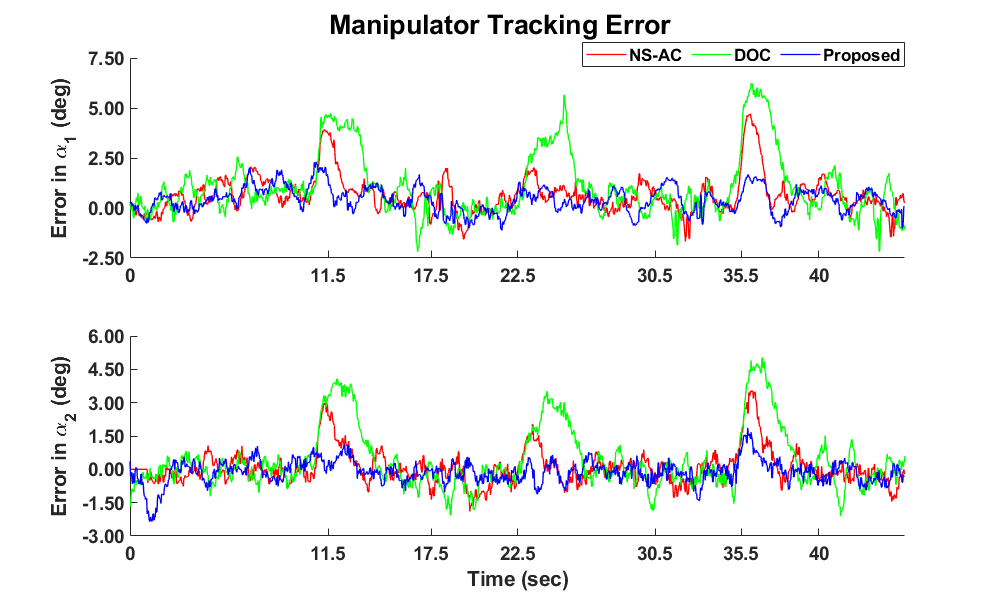}
    \centering
    \caption{Comparison of manipulator tracking error. }
    \label{fig3}
\end{figure}

\begin{table}[t]
\footnotesize
\renewcommand{\arraystretch}{1.2}
\caption{\small Tracking performance comparison (RMS)}
		\centering
{
\scalebox{0.8}{	
\begin{tabular}{c|ccc|ccc|cc}
\hline
\multirow{2}{*}{} & \multicolumn{3}{c|}{Position Error (m)}                                   & \multicolumn{3}{c|}{Attitude Error (deg)}     
& \multicolumn{2}{c}{Arm Error (deg)}
      \\ \cline{2-9} 
                  & $x$  & $y$  & $z$        & $\phi$ & $\theta$  & $\psi$   & $\alpha_1$         & $\alpha_2$         \\ \hline
DOC   & $0.093 $& $0.041 $& $0.153$& $1.603$& $5.142$& $1.987$& $3.881$& $3.091$\\ \hline
NS-AC   & $0.079 $& $0.038$& $0.105 $& $1.477$& $4.097$& $1.128$& $2.225 $& $2.016$\\ \hline
Proposed   & $\mathbf{0.031}$& $\mathbf{0.019}$& $\mathbf{0.052}$& $\mathbf{1.128 }$& $\mathbf{2.018}$& $\mathbf{1.033 }$& $\mathbf{1.107}$& $\mathbf{1.015}$\\ \hline
\end{tabular}}}
\label{table_performance}

\end{table}

The performance of the various controllers is shown in
Figs. \ref{fig1}-\ref{fig3}. 
A clear trend emerges at  $t = 11.5, 22.5, 35.5$s (approx.) in Fig. \ref{fig1} and \ref{fig3}, where sharp error spikes appear in NS-AC and DOC due to payload pickup events. While NS-AC and DOC perform comparably to the proposed switched adaptive controller without a payload (regime corresponding to $\sigma = 1$), their errors increase significantly when mass and inertia suddenly change after payload pick-up (regime corresponding to $\sigma = 2$).
 DOC relies on a precomputed disturbance model, while it has been explained, the inertial couplings are very difficult to model and give rise to state-dependent uncertainties. Consequently, when the actual disturbances deviate from its assumed model, DOC fails to correct mismatches, leading to higher peaks and oscillatory errors compared to the other controllers. NS-AC lacks an explicit switching mechanism and assumes a single continuously evolving model. This results in a slower decay in error after each spike, as it takes time to compensate for abrupt payload variations. However, having the capability to tackle the unknown inertial couplings, it ends up outperforming DOC.

In contrast, the proposed switched adaptive controller effectively adjusts itself for each dynamic transition (regimes for $\sigma = 1, 2$) while handling unknown inertial couplings. This results in lower peak tracking errors and faster recovery after payload pickups, enhancing both transient performance and overall robustness. The lower Root Mean Square (RMS) errors in Table \ref{table_performance}  validate the effectiveness of the proposed switching framework in maintaining stability under dynamic transitions. 
{ \color{black}
The computation time for all methods is reported in Table \ref{table_compute}.

\begin{figure}[!h]
\includegraphics[width=0.45\textwidth]{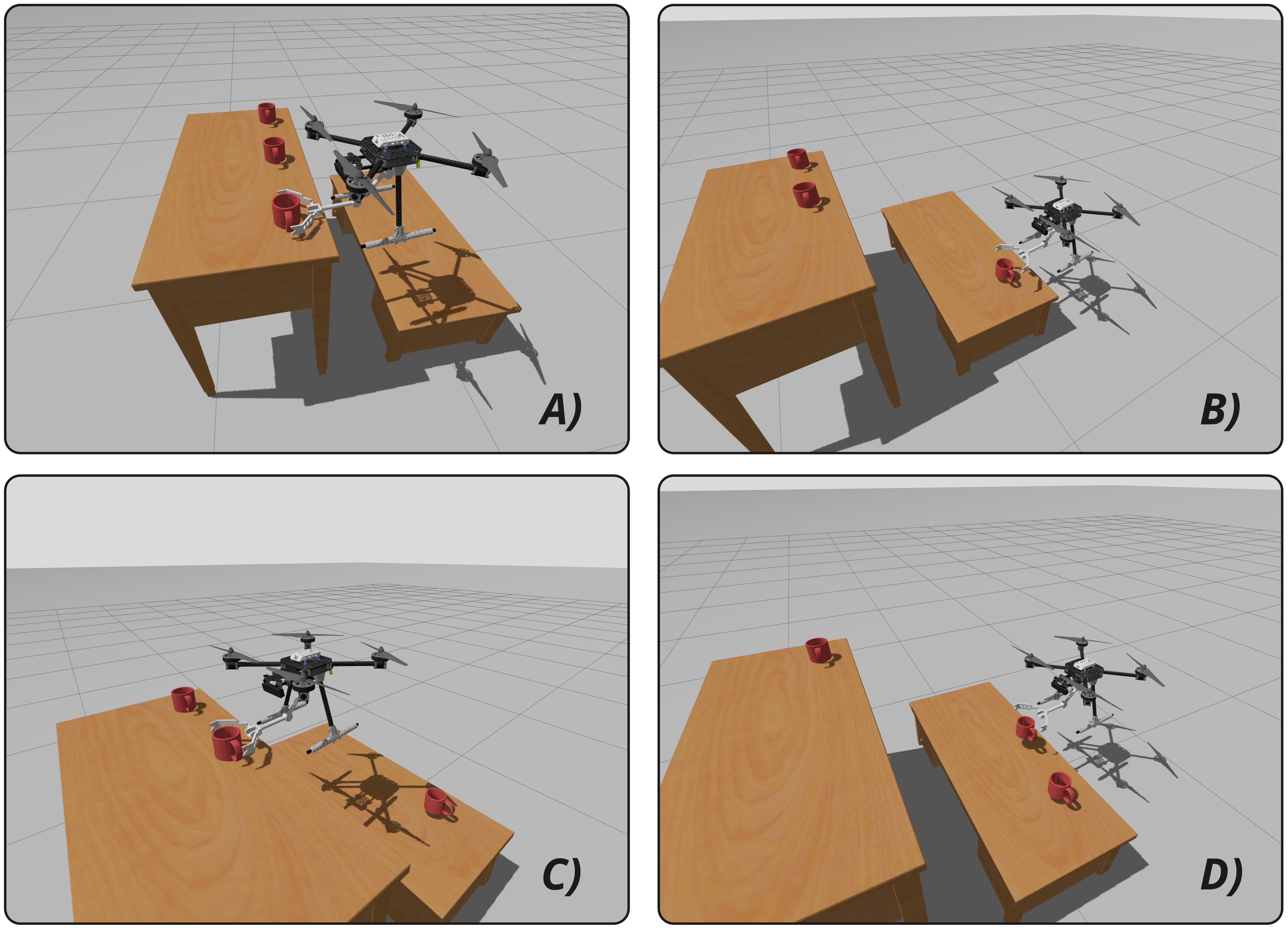}
    \centering
    \caption{{\color{black}The experimental simulation setup showing the picking (Fig A,C) and placing (Fig B,D) of an object.}}
    \label{fig_simulation}
    
\end{figure}

\begin{table}[t]
\footnotesize
\renewcommand{\arraystretch}{1.2}
{\color{black}\caption{\small Onboard computation time comparison}
\centering
\scalebox{0.9}{
\begin{tabular}{c|ccc}
\hline
\textbf{Metric} & \textbf{NS-AC} & \textbf{DOC} & \textbf{Proposed} \\
\hline
Computation time (ms) & 0.40 & 0.55 & 0.50 \\
\hline
\end{tabular}}\label{table_compute}

\parbox{0.9\linewidth}{\footnotesize
\textit{Note:} Computation time denotes the average execution time of one
control-loop iteration on the Raspberry Pi~4 onboard processor (C++).}}
\label{tab:onboard_computation_time}
\end{table}

{\color{black}

\subsection{Simulated Ablation and Sensitivity Studies}
\label{sec:ablation}

\begin{figure}[!h]
\includegraphics[width=0.45\textwidth]{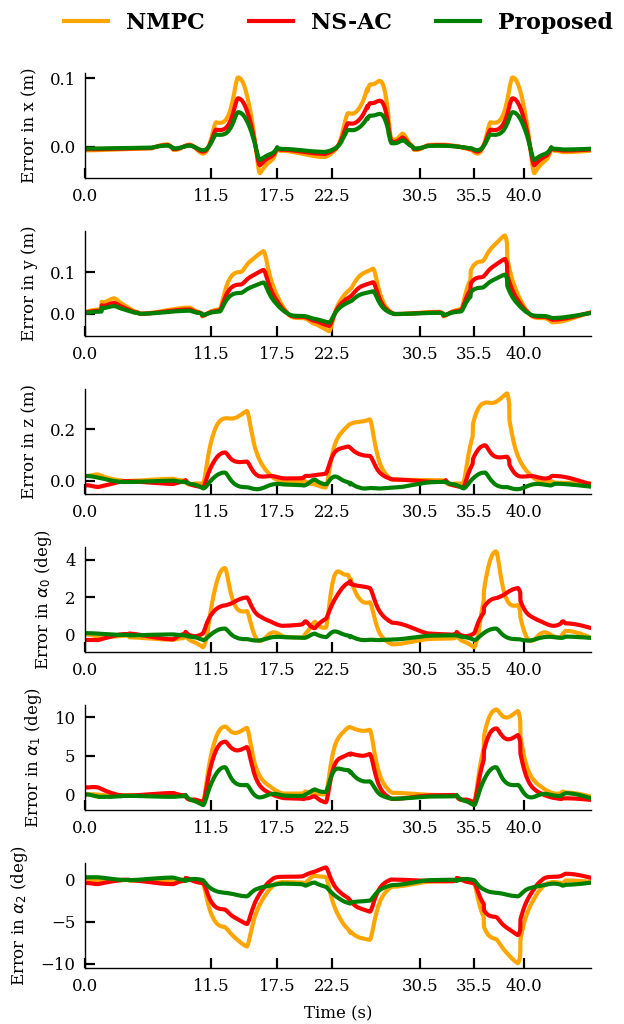}
    \centering
    \caption{{\color{black}Comparison of manipulator tracking error.}}
    \label{fig9}
\end{figure}

\begin{figure}[!h]
\includegraphics[width=0.45\textwidth]{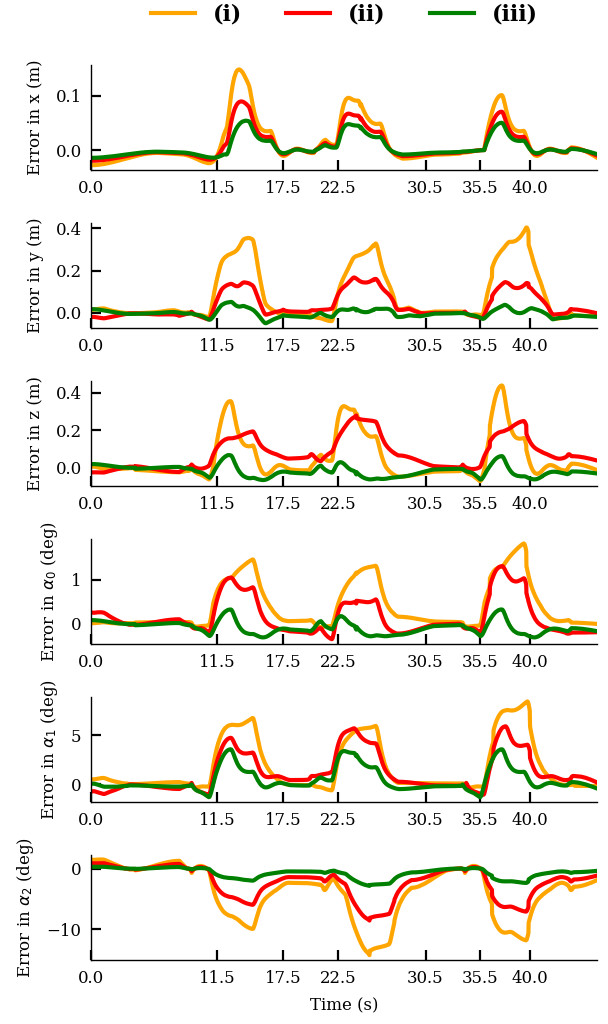}
    \centering
    \caption{{\color{black}Tracking performance for (i) clearly violated (ii) slightly violated, and (iii) satisfied ADT conditions.}}
    \label{fig10}
\end{figure}

To further assess the proposed switched adaptive design under different conditions, we conducted an additional simulation study using an aerial manipulator with a 3-DoF arm in PX4-SITL as shown in Fig. \ref{fig_simulation}. The task setup follows the same payload pick-and-place scenario shown in Fig.~\ref{fig:exp_snap_1}. The physical parameters of the simulated system, including the mass, geometry, inertial properties, actuator configuration, and payload characteristics, were selected to closely emulate the experimental setup. Consequently, the same controller design parameters reported in Table~\ref{tab:control_parameters} were used, with the manipulator gains replicated for the additional DoF.

\subsubsection{Tracking performance}
The proposed controller was compared with two baselines: the previous non-switched adaptive controller (NS-AC) and the nonlinear model predictive control (NMPC) in \cite{hanover2021performance}. The NMPC benchmark was implemented using the ACADO Toolkit, which generates
efficient C/C++ code for real-time nonlinear optimal control. 
NMPC uses a non-switched system model tuned for the no-payload condition, where the coupling terms were set to zero. This allows us to isolate the effect of state-dependent coupling forces induced during payload pick-up. 
The tracking results are shown in Fig.~\ref{fig9}, while the average computation time are reported in Table~\ref{tab:simulation_computation_time}. The proposed controller achieves better tracking and faster convergence than both baselines. Compared with NS-AC, it benefits from regime-dependent adaptive gains during payload transitions. NMPC, instead, depends on a model designed for the no-payload
condition, and cannot compensate for switching-induced uncertainties, leading to slower convergence and a persistent tracking offset. The post-transition settling performance follows the order
$\mathrm{NMPC}<\mathrm{NS\mbox{-}AC}<\mathrm{Proposed}$. 
\begin{table}[t]
\footnotesize
\renewcommand{\arraystretch}{1.2}
\caption{{\color{black}\small Estimated controller computation time on a 3.0~GHz processor laptop (C++)}}
\centering
{\color{black}\begin{tabular}{c|cccc}
\hline
\textbf{Controller} & \textbf{NS-AC}  &
\textbf{Proposed} & \textbf{NMPC} \\
\hline
Computation time (ms) & 0.020 &  0.023 & 3.6 \\
\hline
\end{tabular}
\label{tab:simulation_computation_time}}
\end{table}

\subsubsection{Study on Different ADT Conditions}

To investigate the effect of satisfying or violating the ADT condition,
the same switching sequence is retained in all cases. The ADT condition in
\eqref{eq:sw_law} requires $
\vartheta <
\frac{T_2-T_1}{N_\sigma(T_1,T_2)}
=
\frac{50}{6}
=
8.33~\mathrm{s}$.
For the nominal 3-DoF case, the same controller gains as those used in the
real experiments are retained. Accordingly, Table~\ref{tab:control_parameters}
gives $
\mu=\max\{1.5,1.667,2\}=2$,
and, with $\kappa=0.2$, $
\vartheta=\frac{\ln(2)}{0.2}=3.47~\mathrm{s}$,
which satisfies the ADT condition. For the two additional cases, only the
design matrix $\bar{\boldsymbol M}_{\alpha\alpha1}$ is changed to
$0.274\boldsymbol I$ and $0.585\boldsymbol I$, yielding
$\mu=5.48$ and $11.7$, and hence
$\vartheta=8.51~\mathrm{s}$ and $12.30~\mathrm{s}$, respectively. These
values correspond to a slight violation near the theoretical boundary and
a clear violation of the ADT condition. The corresponding tracking results
are shown in Fig.~\ref{fig10}.
The tracking errors increase as the degree of violation becomes larger. Note that the ADT condition in \eqref{eq:sw_law} is a sufficient condition for stability, so that, its violation removes the theoretical stability guarantee but does not necessarily cause instability. Hence, the proposed method is robust to different ADT conditions.

\subsubsection{Noise-Sensitivity Analysis}

For noise attenuation, each measured signal $z\in{s_j,\xi,\ddot{\chi}}$ was filtered using a first-order low-pass filter,
$
z_f[k]=(1-\beta)z_f[k-1]+\beta z[k], ~~
\beta=\frac{T_s}{\tau_f+T_s}$.
The controller was implemented with $T_s=0.005\mathrm{s}$ and $\tau_f=0.005\mathrm{s}$, giving $\beta=0.5$. To evaluate noise sensitivity, Gaussian noise was injected into the measured signals, with position-noise standard deviations of $0.01$, $0.03$, and $0.05~\mathrm{m}$ for the low-, moderate-, and high-noise cases, respectively, i.e.,
$ y_{\mathrm{meas}}(t)=y(t)+\eta(t),~ 
\eta(t)\sim \mathcal{N}(0,\sigma_\eta^2).$ The results in Table \ref{table:noise_performance} show that the tracking errors remain bounded and, as expected, the bound increases with increasing noise.

\begin{table}[t]
\footnotesize
\renewcommand{\arraystretch}{1.2}
\caption{{\color{black}\small Sensitivity of tracking performance to noise (RMS)}}
\centering
\resizebox{\columnwidth}{!}{
{\color{black}
\begin{tabular}{c|ccc|ccc|ccc}
\hline
\multirow{2}{*}{\textbf{Noise}}
& \multicolumn{3}{c|}{Position Error (m)}
& \multicolumn{3}{c|}{Attitude Error (deg)}
& \multicolumn{3}{c}{Arm Error (deg)}
\\
\cline{2-10}
& $x$ & $y$ & $z$
& $\phi$ & $\theta$ & $\psi$
& $\alpha_1$ & $\alpha_2$ & $\alpha_3$
\\
\hline
Low
& 0.012 & 0.009 & 0.021
& 0.21 & 0.18 & 0.24
& 0.487 & 0.408 & 0.491
\\ \hline
Moderate
& 0.016 & 0.011 & 0.028
& 0.29 & 0.25 & 0.32
& 0.566 & 0.510 & 0.550
\\ \hline
High
& 0.021 & 0.013 & 0.035
& 0.41 & 0.36 & 0.45
& 0.735 & 0.678 & 0.741
\\
\hline
\end{tabular}
}}
\label{table:noise_performance}
\end{table}

}}

\section{Conclusion}
This paper presented a switched adaptive control framework to handle unknown inertial couplings, state-dependent uncertainties, and discontinuous variations in aerial manipulator operations. It has been shown that the proposed switched adaptive control framework ensures regime-dependent adaptation, enhancing robustness across different operational regimes. 
Both simulated and real experiments have validated the effectiveness of regime-aware adaptation for precise and reliable aerial manipulation.

\section{Appendix: Proof of Theorem 1}
\label{sec:app}

From the adaptive laws and initial conditions in \eqref{adaptive_law_j}, it can be verified that $\hat{K}_{ji\sigma}(t)\geq 0$ $\forall t>0$ $i=0,\cdots,3, j= p,q, \alpha$ and $\exists \ushort{\zeta}_{j\sigma}, \ushort{\gamma}_{j\sigma} \in \mathbb{R}^{+}$ such that $\forall t \geq t_0$
\begin{align} \label{adaptive_gain_bound}
0 < \ushort{\zeta}_{j\sigma} \leq \zeta_{j\sigma} (t) <  \bar{\zeta}_{j\sigma}; ~~ 0 < \ushort{\gamma}_{j\sigma} \leq \gamma_{j\sigma} (t) <  \bar{\gamma}_{j\sigma}. 
\end{align}

The closed-loop stability is analyzed using the following Lyapunov function candidate:
\begin{align}
&V(t) = V_{{p}}(t) + V_{{q}}(t) +  V_{\alpha}(t), \label{complete_lyap} 
\end{align}
where, for $j=p,q,\alpha$, we define $V_p$, $V_q$, $V_\alpha$ as
\begin{align}
&V_j(t) = \frac{1}{2} s_j^{\top}(t) \bar{\boldsymbol M}_{jj\sigma(t)} s_j(t) + \nonumber \\ 
&\sum_{\sigma=1}^{2} \left(\frac{\zeta_{j\sigma}(t)}{\ushort{\zeta}_j} + \frac{\gamma_{j\sigma}(t)}{\ushort{\gamma}_j} +   \sum_{i=0}^{3}\frac{(\hat{K}_{ji\sigma}(t) - K_{ji\sigma}^{*})^2}{2} \right) \label{lyap_j} ,
\end{align}
with $\ushort{\gamma}_j = \min_{\sigma \in \Omega} \{\ushort{\gamma}_{j\sigma} \}$ and $\ushort{\zeta}_j = \min_{\sigma \in \Omega} \{\ushort{\zeta}_{j\sigma} \}$.

Let regime ${\sigma({t_{l+1}^{-}})}$ be active when $t \in [t_l~~t_{l+1})$ and regime ${\sigma({t_{l+1}})}$ be active when $t \in [t_{l+1}~~t_{l+2})$. Then, without loss of generality, the stability analysis is carried out by observing the behaviour of $V(t)$ in two cases: (A) at the switching instant $t_{l+1},~ l \in \mathbb{N}^{+}$ and (B) between two consecutive switching instants when $t \in [t_l~~t_{l+1})$. In the following, we conduct such investigations.

\subsection{Behaviour of $V(t)$ at the switching instant}
We first analyze the individual Lyapunov functions ${V}_p$, ${V}_q$, and ${V}_\alpha$, and then combine them to assess the overall stability. For $j \in {p, q, \alpha}$, from \eqref{lyap_j}, we have before switching,
\begin{align} 
&V_j({t_{l+1}^{-}}) = \frac{1}{2} s_j^{\top}({t_{l+1}^{-}}) \bar{\boldsymbol M}_{jj\sigma({t_{l+1}^{-}})} s_j({t_{l+1}^{-}}) +   \nonumber \\
& \sum_{\sigma=1}^{2}\left( \frac{\zeta_{j\sigma}({t_{l+1}^{-}})}{\ushort{\zeta}_j}+ \frac{\gamma_{j\sigma}({t_{l+1}^{-}})}{\ushort{\gamma}_j} +\sum_{i=0}^{3}\frac{(\hat{K}_{ji\sigma}({t_{l+1}^{-}}) - K_{ji\sigma}^{*})^2}{2} \right) ,\nonumber
\end{align}
and, after switching,
\begin{align}
&V_j({t_{l+1}}) = \frac{1}{2} s_j^{\top}({t_{l+1}}) \bar{\boldsymbol M}_{jj\sigma({t_{l+1}})} s_j({t_{l+1}}) + \nonumber \\
& \sum_{\sigma=1}^{2} \left( \frac{\zeta_{j\sigma}({t_{l+1}})}{\ushort{\zeta}_j}   + \frac{\gamma_{j\sigma}({t_{l+1}})}{\ushort{\gamma}_j} +\sum_{i=0}^{3}\frac{(\hat{K}_{ji\sigma}({t_{l+1}}) - K_{ji\sigma}^{*})^2}{2} \right). 
\nonumber \end{align}

In view of the continuity of the variables $s_j$, $\zeta_{j\sigma}$, $\gamma_{j\sigma}$, $\hat{K}_{ji\sigma}$, we have
$s_j({t_{l+1}^{-}}) = s_j({t_{l+1}})$, $\zeta_{j\sigma}({t_{l+1}^{-}}) =  \zeta_{j\sigma}({t_{l+1}})$,  $\gamma_{j\sigma}({t_{l+1}^{-}}) = \gamma_{j\sigma}({t_{l+1}})$, $(\hat{K}_{ji\sigma}({t_{l+1}^{-}}) - K_{ji\sigma}^{*}) = (\hat{K}_{ji\sigma}({t_{l+1}}) - K_{ji\sigma}^{*})$ for all $\sigma \in \Omega$. Further, from the inequalities $ {\ushort{\varrho}_j}   s_{j}^{\top}({t})  s_{j}({t}) \leq s_{j}^{\top}({t}) M_{jj\sigma(t)}  s_{j}({t}) \leq {\bar{\varrho}_j}  s_{j}^{\top}({t}) s_{j}({t})$,  one has
\begin{align}
  &V_{j}({t_{l+1}}) - V_{j}({t_{l+1}^{-}}) \nonumber \\
  &=  \frac{1}{2}  s_{j}^{\top}({t_{l+1}}) ( \bar{\boldsymbol M}_{jj\sigma({t_{l+1}})} -  \bar{\boldsymbol M}_{jj\sigma({t_{l+1}^{-}})} )  s_{j}({t_{l+1}}) \nonumber\\
&\leq  \frac{{\bar{\varrho}_j} - {\ushort{\varrho}_j} }{{2\ushort{\varrho}_j} } s_{j}^{\top}({t_{l+1}}) \bar{\boldsymbol M}_{jj\sigma({t_{l+1}^{-}})}  s_{j}({t_{l+1}})  \nonumber\\
&\leq \frac{{\bar{\varrho}_j} - {\ushort{\varrho}_j} }{{\ushort{\varrho}_j} } V_{j}(t_{l+1}^{-}) \nonumber \\
 &\Rightarrow V_{j}({t_{l+1}})   \leq \mu_j V_{j}(t_{l+1}^{-}), \label{eq:mu_j}
\end{align} 
with $\mu_j$ as defined in \eqref{mu}.
Hence, using (\ref{mu}) and (\ref{eq:mu_j}), from (\ref{complete_lyap}) we have
\begin{align}
    V({t_{l+1}})  & \leq \mu_p V_{{p}}(t_{l+1}^{-}) +  \mu_q V_{{q}}(t_{l+1}^{-}) + \mu_\alpha V_{{\alpha}}(t_{l+1}^{-}) \nonumber \\
   &  \leq \mu V(t_{l+1}^{-}). \label{eq:mu} 
\end{align}
\subsection{Behaviour of $V(t)$ between consecutive switching instants}

For ease of analysis, we first determine $\dot{V}_p$, $\dot{V}_q$ and $\dot{V}_\alpha$ and then combine them to determine the overall closed-loop stability. For $j= p, q, \alpha$, the process is as follows: 

\subsubsection{\textbf{Analysis of $\dot{V}_j$}} 
\textbf{Scenario (i): $\norm{s_j} \geq  \varpi_j$} 

\begin{align}
\dot{V}_j &= s_j^{\top}(t) 
{\color{black} \bar{\boldsymbol M}_{jj \sigma} }
\dot{s}_j(t) \nonumber \\ & + \sum_{\sigma=1}^{2} \left\{ \sum_{i=0}^{3}(\hat{K}_{ji\sigma}(t) - K_{ji\sigma}^{*}) \dot{\hat{K}}_{ji\sigma}(t) + \frac{\dot{\zeta}_{j\sigma}(t)}{\ushort{\zeta}_j} + \frac{\dot{\gamma}_{j\sigma}(t)}{\ushort{\gamma}_j} \right\}  \nonumber \\
& {\color{black} = s_j^{\top}(-{\boldsymbol \Lambda}_{j\sigma} s_j - \Delta \tau_{j\sigma} + \mathcal{E}_{j\sigma})}  \nonumber \\ 
& + \sum_{\sigma=1}^{2} \left\{ \sum_{i=0}^{3}(\hat{K}_{ji\sigma}(t) - K_{ji\sigma}^{*}) \dot{\hat{K}}_{ji\sigma}(t) + \frac{\dot{\zeta}_{j\sigma}(t)}{\ushort{\zeta}_j} + \frac{\dot{\gamma}_{j\sigma}(t)}{\ushort{\gamma}_j} \right\}  .
 \nonumber
 \end{align}
 
The adaptive law (\ref{adaptive_law_j}) reveals that $\gamma_{j\sigma}$ remains unchanged during active regimes, while $(\zeta_{j\bar{\sigma}}$, $\hat{K}_{ji\bar{\sigma}})$ remain unchanged during inactive regimes. Hence, we have
 \begin{align} \label{lyap_j_step2}
\dot{V}_j & \leq -s_j^{\top}{\boldsymbol \Lambda}_{j\sigma} s_j + 
{\color{black} \norm{\mathcal{E}_{j \sigma}} } \norm{s_j} - \rho_{j\sigma} \norm{s_j}  \nonumber \\ 
&+\sum_{i=0}^{3}(\hat{K}_{ji\sigma} - K_{ji\sigma}^{*}) \dot{\hat{K}}_{ji\sigma} + \frac{\dot{\zeta}_{j\sigma}}{\ushort{\zeta}_j} +\sum_{ \bar{\sigma} \in \mathcal{N}(\sigma)} \frac{\dot{\gamma}_{{j} \bar{\sigma}}} {\ushort{\gamma}_{j}} \nonumber \\
& \leq -s_j^{\top}{\boldsymbol \Lambda}_{j\sigma}s_j - \sum_{i=0}^{2}(\hat{K}_{ji\sigma} - K_{ji\sigma}^{*})\norm{\xi}^i \norm{s_j} - \zeta_j \norm{s_j}  \nonumber \\ 
&- ( \hat{K}_{j3\sigma} - K_{j3\sigma}^{*} )\norm{\ddot{\chi}}\norm{s_j} \nonumber \\ 
&+ \sum_{i=0}^{3}(\hat{K}_{ji\sigma} - K_{ji\sigma}^{*}) \dot{\hat{K}}_{ji\sigma} + \frac{\dot{\zeta}_{j\sigma}}{\ushort{\zeta}_j} +\sum_{ \bar{\sigma} \in \mathcal{N}(\sigma)} \frac{\dot{\gamma}_{{j} \bar{\sigma}}} {\ushort{\gamma}_{j}}, 
 \end{align} 
 where $\mathcal{N}(\sigma)$ denotes the set of inactive regimes corresponding to the active regime $\sigma$.

From \eqref{adaptive_law_j1} and \eqref{adaptive_law_j2} we have
 \begin{align} \label{lyap_j_step3}
 &\sum_{i=0}^{3}(\hat{K}_{ji\sigma} - K_{ji\sigma}^{*}) \dot{\hat{K}}_{ji\sigma} =  \nonumber \\
 &\sum_{i=0}^{2}(\hat{K}_{ji\sigma} - K_{ji\sigma}^{*}) (\norm{s_j} \norm{\xi}^i - \nu_{ji\sigma} \hat{K}_{ji\sigma})
 \nonumber \\ 
 & + 
 (\hat{K}_{j3\sigma} - K_{j3\sigma}^{*})(\norm{s_j} \norm{\ddot{\chi}} - \nu_{j3\sigma} \hat{K}_{j3\sigma})
 \nonumber \\ 
 &= \sum_{i=0}^{2}(\hat{K}_{ji\sigma} - K_{ji\sigma}^{*})\norm{\xi}^i \norm{s_j} + (\hat{K}_{j3\sigma} - K_{j3\sigma}^{*})\norm{\ddot{\chi}} \norm{s_j}  
 \nonumber \\ & + 
 \sum_{i=0}^{3}(\nu_{ji\sigma} \hat{K}_{ji\sigma} K_{ji\sigma}^{*} - \nu_{ji\sigma} \hat{K}_{ji\sigma}^2).  
 \end{align}
 
Further, one can verify that
 \begin{equation} \label{lyap_p_step4}
 \small
 (\nu_{ji\sigma} \hat{K}_{ji\sigma} K_{ji\sigma}^{*} - \nu_{ji\sigma} \hat{K}_{ji\sigma}^2)  \leq - \frac{\nu_{ji\sigma}}{2}  \left((\hat{K}_{ji\sigma} -  K_{ji\sigma}^*)^2 - {{K}^{*}}_{ji \sigma }^2 \right).
 \end{equation}

The adaptive law (\ref{adaptive_law_j4}) and relation (\ref{adaptive_gain_bound}) lead to
\begin{align} \label{lyap_j_step5}
 \frac{\dot{\gamma}_{{j} \bar{\sigma}}}{\ushort{\gamma}_j}=& -\left(1+ \frac{\varrho_{j\sigma}}{2}\sum \limits_{i=0}^{3} {\hat{K}_{j i\bar{\sigma}}}^2 \right) \frac{\gamma_{j \bar{\sigma}}}{\ushort{\gamma}_j} +\frac{\epsilon_{j \bar{\sigma}}}{\ushort{\gamma}_j} \nonumber \\
 &\leq -\frac{\varrho_{j\sigma}}{2} \sum \limits_{i=0}^{3} {\hat{K}_{j i\bar{\sigma}}}^2  +\frac{\epsilon_{j \bar{\sigma}}}{\ushort{\gamma}_j}.
\end{align}

 Substituting (\ref{lyap_j_step3})-(\ref{lyap_j_step5}) into (\ref{lyap_j_step2}) yields
 \begin{align} \label{lyap_j_step6}
\dot{V}_j(t)& \leq - \lambda_{\min}({\boldsymbol \Lambda}_{j\sigma})||s_j||^2  \nonumber \\
&-\sum_{i=0}^{3}\frac{\nu_{ji\sigma}}{2}  \left((\hat{K}_{ji\sigma} -  K_{ji\sigma}^*)^2 - {{K}^{*}}_{ji\sigma}^2 \right) \nonumber \\
& + \sum_{ \bar{\sigma} \in \mathcal{N}(\sigma)} \left ( \frac{ {\epsilon}_{j \bar{\sigma}}}{\ushort{\gamma}_{j}}  -\frac{\varrho_{j\sigma}}{2} \sum \limits_{i=0}^{3} {\hat{K}_{j i\bar{\sigma}}}^2\right).
 \end{align}

The definition of $V_j$ in (\ref{lyap_j}) yields 
\begin{align} \label{lyap_j_step7}
&V_j \leq \lambda_{\max} (\bar{\boldsymbol M}_{jj\sigma})\norm{s_j}^2   \nonumber \\ 
& + \sum_{i=0}^{3}\frac{(\hat{K}_{ji\sigma} - {K_{ji\sigma}^{*}})^2}{2} +\frac{\bar{\zeta}_{j\sigma}}{\ushort{\zeta}_j}  + \sum_{ \bar{\sigma} \in \mathcal{N}(\sigma)}   \frac{\bar{\gamma}_{j\bar{\sigma}}}{\ushort{\gamma}_j} .
 \end{align}
 
Using (\ref{lyap_j_step7}), the condition (\ref{lyap_j_step6}) is further simplified to 
\begin{align} \label{lyap_j_step8}
& \dot{V}_j \leq -\varrho_{j\sigma} V_j +  \frac{1}{2}\sum \limits_{i=0}^{3} \nu_{ji\sigma} {{K}^{*}}_{ji\sigma}^2  + \varrho_{j\sigma}  \frac{\bar{\zeta}_{j\sigma}}{\ushort{\zeta}_j}  \nonumber \\
&+\sum_{ \bar{\sigma} \in \mathcal{N}(\sigma)}\left( \frac{ {\epsilon}_{j \bar{\sigma}}}{\ushort{\gamma}_{j}} + \varrho_{j\sigma}  \frac{\bar{\gamma}_{j\bar{\sigma}}}{\ushort{\gamma}_j} \right).
 \end{align}

\textbf{Scenario (ii): $\norm{s_j} <  \varpi_j$}. In this case, we have
 \begin{align} \label{lyap_j_step9}
\dot{V}_j & \leq -s_j^{\top}{\boldsymbol \Lambda}_{j\sigma} s_j + \norm{\mathcal{E}_{j\sigma}} \norm{s_j} - \rho_{j\sigma} \frac{\norm{s_j}^2}{\varpi_j}  \nonumber \\ 
&+\sum_{i=0}^{3}(\hat{K}_{ji\sigma} - K_{ji\sigma}^{*}) \dot{\hat{K}}_{ji\sigma} + \frac{\dot{\zeta}_{j\sigma}}{\ushort{\zeta}_j} +\sum_{ \bar{\sigma} \in \mathcal{N}(\sigma)} \frac{\dot{\gamma}_{{j} \bar{\sigma}}} {\ushort{\gamma}_{j}} \nonumber \\
& \leq - \lambda_{\min}({\boldsymbol \Lambda}_{j\sigma})\norm{s_j}^2 + \norm{\mathcal{E}_{j\sigma}} \norm{s_j}  \nonumber \\ 
&+\sum_{i=0}^{3}(\hat{K}_{ji\sigma} - K_{ji\sigma}^{*}) \dot{\hat{K}}_{ji\sigma} + \frac{\dot{\zeta}_{j\sigma}}{\ushort{\zeta}_j} +\sum_{ \bar{\sigma} \in \mathcal{N}(\sigma)} \frac{\dot{\gamma}_{{j} \bar{\sigma}}} {\ushort{\gamma}_{j}} \nonumber \\
& {\small \leq - \lambda_{\min}
{\color{black}({\boldsymbol \Lambda}_{j\sigma}) } 
\norm{s_j}^2  +
\sum_{i=0}^{2}\hat{K}_{ji\sigma} \norm{\xi}^i \norm{s_j} + \hat{K}_{j3\sigma} \norm{\ddot{\chi}} \norm{s_j} } \nonumber \\
&-\sum_{i=0}^{3}\frac{\nu_{ji\sigma}}{2}  \left((\hat{K}_{ji\sigma} -  K_{ji\sigma}^*)^2 - {{K}^{*}}_{ji\sigma}^2 \right) + \frac{\dot{\zeta}_{j\sigma}}{\ushort{\zeta}_j} \nonumber \\
& + \sum_{ \bar{\sigma} \in \mathcal{N}(\sigma)} \left (\frac{ {\epsilon}_{j \bar{\sigma}}}{\ushort{\gamma}_{j}}  -\frac{\varrho_{j\sigma}}{2} \sum \limits_{i=0}^{3} {\hat{K}_{j i\bar{\sigma}}}^2\right).
 \end{align} 
The adaptive law (\ref{adaptive_law_j3}) and relation (\ref{adaptive_gain_bound}) lead to
 \begin{align} \footnotesize
 \label{lyap_j_step10}
\frac{\dot{\zeta}_{j\sigma}}{\ushort{\zeta}_j} \hspace{-1mm}  &= -\left(1 + (\hat{K}_{j3\sigma}\norm{\ddot{\chi}}+ \sum_{i=0}^{2}\hat{K}_{ji\sigma} \norm{\xi}^i )\norm{s_j} \right)  \frac{{\zeta_{j\sigma}}}{\ushort{\zeta}_j} + 
\hspace{-1mm} \frac{\delta_{j\sigma}}{\ushort{\zeta}_j}   \nonumber \\ 
& \leq  -\hat{K}_{j3\sigma} \norm{\ddot{\chi}} \norm{s_j} - \sum_{i=0}^{2}\hat{K}_{ji\sigma} \norm{\xi}^i \norm{s_j} + \frac{\delta_{j\sigma}}{\ushort{\zeta}_j}.
 \end{align}
Substituting (\ref{lyap_j_step10}) into (\ref{lyap_j_step9}) and using (\ref{lyap_j_step7}), $\dot{V}_j$ is simplified to 
\begin{align} \label{lyap_j_step11}
 \dot{V}_j &\leq -{\color{black} \varrho_{j\sigma} } V_j +  \frac{1}{2}\sum \limits_{i=0}^{3} {\color{black} \nu_{ji \sigma}  {{K}^{*}}_{ji \sigma}^2  } + \frac{\delta_{j\sigma}}{\ushort{\zeta}_j} + {\color{black} \varrho_{j\sigma} }  {
 \color{black} \frac{\bar{\zeta}_{j \sigma}}  {\ushort{\zeta}_j}  }
 \nonumber \\
&{\color{black} +\sum_{ \bar{\sigma} \in \mathcal{N}(\sigma)}\left( \frac{ {\epsilon}_{j \bar{\sigma}}}{\ushort{\gamma}_{j}} + {\color{black} \varrho_{j\sigma} }  \frac{\bar{\gamma}_{j\bar{\sigma}}}{\ushort{\gamma}_j} \right).}
 \end{align}

\subsubsection{\textbf{Overall Stability Analysis}}
To obtain the overall behavior of $\dot{V}$, let us consider the following possible cases: \\
\textbf{Case (i): $\norm{s_p} \geq  \varpi_p, \norm{s_q} \geq  \varpi_q, \norm{s_\alpha} \geq  \varpi_\alpha$} \\
\textbf{Case (ii): $\norm{s_p} <  \varpi_p, \norm{s_q} <  \varpi_q, \norm{s_\alpha} <  \varpi_\alpha$} \\
\textbf{Case (iii): $\norm{s_p} <  \varpi_p, \norm{s_q} \geq  \varpi_q, \norm{s_\alpha} \geq  \varpi_\alpha$} \\
\textbf{Case (iv): $\norm{s_p} \geq  \varpi_p, \norm{s_q} <  \varpi_q, \norm{s_\alpha} \geq \varpi_\alpha$} \\
\textbf{Case (v): $\norm{s_p} \geq  \varpi_p, \norm{s_q} \geq  \varpi_q, \norm{s_\alpha} <  \varpi_\alpha$} \\
\textbf{Case (vi): $\norm{s_p} <  \varpi_p, \norm{s_q} <  \varpi_q, \norm{s_\alpha} \geq  \varpi_\alpha$} \\
\textbf{Case (vii): $\norm{s_p} \geq  \varpi_p, \norm{s_q} <  \varpi_q, \norm{s_\alpha} <  \varpi_\alpha$} \\
\textbf{Case (viii): $\norm{s_p} <  \varpi_p, \norm{s_q} \geq  \varpi_q, \norm{s_\alpha} <  \varpi_\alpha$}\\ 

Observing the results of $\dot{V}_j$, $j=p,q,\alpha$ under various scenarios as in (\ref{lyap_j_step8}), (\ref{lyap_j_step11}) the combined time derivative of ${V}$ for all Cases (i)-(viii) from (\ref{complete_lyap}) is obtained as  
\begin{align}
\dot{V} & \leq -\varrho V + \Delta,  \label{new}
\end{align}
\begin{multline}
\text{where}~\Delta =\sum \limits_{j=p, q, \alpha} \left( \frac{1}{2}\sum \limits_{i=0}^{3} \nu_{ji\sigma} {{K}^{*}}_{ji\sigma}^2     + \varrho_{j\sigma}  \frac{\bar{\zeta}_{j\sigma}}{\ushort{\zeta}_j} 
+\frac{\delta_{j\sigma}}{\ushort{\zeta}_j} \right. \\ 
\left. +\sum_{\bar{\sigma} \in \mathcal{N}(\sigma)}\left( \frac{{\epsilon}_{j \bar{\sigma}}}{\ushort{\gamma}_{j}} + \varrho_{j\sigma}  \frac{\bar{\gamma}_{j\bar{\sigma}}}{\ushort{\gamma}_j} \right) \right), \nonumber
\end{multline}
with $\varrho$ being defined in \eqref{mu_def}. Using the relation $0<\kappa<\varrho$ from \eqref{mu_def}, then (\ref{new}) can be written as
\begin{align} \label{V_dot_final}
\dot{V} & \leq  -\kappa V - (\varrho - \kappa)V + \Delta.
\end{align}
Defining a scalar $\mathcal{B} = \frac{\Delta}{(\varrho - \kappa)}$, it can be concluded that $\dot{V} (t) < - \kappa V (t)$ when $V (t) \geq \mathcal{ B}$. 
Next, further analysis is needed to observe the behaviour of $V(t)$ between the two consecutive switching instants, i.e., $t \in [t_{l}~t_{l+1})$, for two possible cases: 
\begin{itemize}
\item when $V(t) \geq \mathcal{B}$, we have $\dot{V}(t) \leq - \kappa V(t) $ implying exponential decrease of $V(t)$;
\item when $V(t) <\mathcal{B}$, no exponential decrease can be derived.
\end{itemize}
The behavior of $V(t)$ is discussed below individually for these two cases.

When $V(t) \geq \mathcal{B}$,
there exists a time, call it $\iota$, when $V(t)$ enters into the bound $\mathcal{B}$ and $N_\sigma(t)$ denotes the number of all switching intervals for $t \in [0 ~~\iota)$. Accordingly, for $t \in [0 ~~\iota)$, using (\ref{eq:mu}), (\ref{V_dot_final}) and $N_{\sigma}(0,t)$ from Definition \ref{def1} we have 
\begin{align}
  V(t)   \leq & \exp \left( - \kappa (t-t_{ N_\sigma(t)-1})\right) V(t_{N_\sigma(t)-1}) \nonumber\\
\leq & \mu\exp \left( - \kappa (t-t_{N_\sigma(t)-1})\right) V(t_{N_\sigma(t)-1}^{-}) \nonumber\\
\leq & \mu\exp \left( - \kappa (t-t_{N_\sigma(t)-1})\right) \cdot \mu \nonumber \\
& \qquad \exp \left( - \kappa (t_{N_\sigma(t)-1}-t_{N_\sigma(t)-2}) \right)V(t_{N_\sigma(t)-2}^{-}) \nonumber\\
&\qquad \qquad \qquad \qquad \vdots \nonumber\\
\leq & \mu\exp \left( - \kappa (t-t_{N_\sigma(t)-1})\right) \cdot \mu \nonumber \\
& \hspace{-1mm} \exp  \left(  - \kappa (t_{N_\sigma(t)-1}-t_{N_\sigma(t)-2}) \right)  \cdots \hspace{-.5mm} \mu \exp \left(-\kappa (t_1) \right) V(0) \nonumber\\
= & \mu^{N_{\sigma}(0,t)} \exp \left(- \kappa (t) \right)V(0)  \nonumber \\
{\color{black} \leq} & c \left( \exp \left( -\kappa + ({\ln \mu}/{\vartheta}) \right) t\right)V(0), \label{eq:dot_V_part_8}
\end{align}
where $c \triangleq \exp \left( N_0 \ln \mu \right) $ is a constant. It can be noted from (\ref{eq:dot_V_part_8}) that for achieving stability $V(t) < c V(0)$, one needs to satisfy the ADT condition $\vartheta > \ln\mu / \kappa $ as in (\ref{eq:sw_law}). Moreover, as $V(\iota) < \mathcal{B}$, one has $V(t_{\bar N(t)+1}) < \mu \mathcal{B}$ from (\ref{eq:mu}) at the next switching instant $t_{\bar N(t)+1}$ after $\iota$. 
By following the standard lines of recursive arguments (\hspace{-1mm} \cite{yuan2017robust}), one can conclude that $V(t) < c \mu \mathcal{B} $ for $t \in [t_0+\iota~~ \infty )$. This confirms that once $V(t)$ enters the interval $[0~\mathcal{B}]$, it cannot exceed the bound $c\mu \mathcal{B}$ any time later with the ADT switching law (\ref{eq:sw_law}).

It can be easily verified that the same argument below (\ref{eq:dot_V_part_8}) also holds for when $V(t) < \mathcal{B}$. 

It can be concluded from the stability arguments of the above two cases that the closed-loop system remains UUB globally leading to
\begin{align}
V(t) \leq \max \left \lbrace c V(0), c\mu \mathcal{B} \right \rbrace, ~~\forall t\geq 0. \label{eq:ub_1}
\end{align}
Again, the definition of $V(t)$ in (\ref{complete_lyap}) yields
\begin{align}
{\color{black} V(t)>V_j(t)\geq
\frac{1}{2}\mathcal{M}_j\norm{s_j}^2, } ~{j=p, q, \alpha}\label{eq:ub_2}
\end{align}
where $\mathcal{M}_j = \min_{\sigma \in \Omega} \{\lambda_{\min} (\bar{\boldsymbol M}_{jj\sigma}) \}$.
Using (\ref{eq:ub_1}) and (\ref{eq:ub_2}) we obtain a tracking error bound on individual quadrotor position, attitude and manipulator sub-dynamics as
\begin{align}
 \norm{s_j} \leq \sqrt{\frac{ {\color{black}2} }{\mathcal{M}_j} \max \left \lbrace c V(0), c \mu \mathcal{B} \right \rbrace}, ~~\forall t\geq 0 \label{eq:ub_3}  
\end{align}
which can be tuned via user-defined parameters from \eqref{ADT_gains}. 

\begin{remark}[Role of $\zeta_{j\sigma}$ and $\gamma_{j\sigma}$] \label{remark_zeta_gamma}
Note that the boundedness of $||s_j|| \leq \varpi_j, j = p, q, \alpha$ in Scenario (ii) of $\dot{V}_j$ ensures the boundedness of   $||\xi_j||$  but does not guarantee the boundedness of $||\ddot{\chi}||$ or $||\xi||$. Therefore, to ensure closed-loop stability, it is necessary to eliminate the terms `($\hat{K}_{j3\sigma} ||\ddot{\chi}|| ||s_j|| + \sum_{i=0}^{2}\hat{K}_{ji\sigma} ||\xi||^i ||s_j||) $', which is possible using the auxiliary gains $\zeta_{j\sigma}$ (cf. $\dot{\zeta}_{j\sigma}$).
On the other hand, the auxiliary gain $\gamma_{j\sigma}$ ensures stability across switching instances by compensating for sudden variations in system dynamics, preventing excessive transients in adaptation. Hence, $\zeta_{j\sigma}$ maintains Lyapunov boundedness, while $\gamma_{j\sigma}$ stabilizes regime transitions.
\end{remark}

\bibliographystyle{IEEEtran}
\bibliography{root}

@article{hanover2021performance,
  title={Performance, precision, and payloads: Adaptive nonlinear {MPC} for quadrotors},
  author={Hanover, Drew and Foehn, Philipp and Sun, Sihao and Kaufmann, Elia and Scaramuzza, Davide},
  journal={IEEE Robotics and Automation Letters},
  volume={7},
  number={2},
  pages={690--697},
  year={2021},
  publisher={IEEE}
}

@inproceedings{arleo2013control,
  title={Control of quadrotor aerial vehicles equipped with a robotic arm},
  author={Arleo, G and Caccavale, Fabrizio and Muscio, Giuseppe and Pierri, Francesco},
  booktitle={21St Mediterranean Conference on Control and Automation},
  pages={1174--1180},
  year={2013},
  organization={IEEE}
}

@book{liberzon2003switching,
  title={Switching in Systems and Control},
  author={Liberzon, Daniel},
  year={2003},
  publisher={Springer Science \& Business Media}
}

@book{spong2008robot,
  title={Robot Dynamics and Control},
  author={Spong, Mark W and Vidyasagar, Mathukumalli},
  year={2008},
  publisher={John Wiley \& Sons}
}

@online{mav_trajectory_generation,
  author = {{Autonomous Systems Lab {ETH Zurich}}},
  title = {{MAV} Trajectory Generation},
  year = {},
  url = {https://github.com/ethz-asl/mav_trajectory_generation},
  urldate = {}
}

@article{berscheid2021jerk,
  title={Jerk-limited Real-time Trajectory Generation with Arbitrary Target States},
  author={Berscheid, Lars and Kr{\"o}ger, Torsten},
  journal={Robotics: Science and Systems XVII},
  year={2021},
url = {https://github.com/pantor/ruckig},
}

@inproceedings{mellinger2011minimum,
  title={Minimum snap trajectory generation and control for quadrotors},
  author={Mellinger, Daniel and Kumar, Vijay},
  booktitle={2011 IEEE International Conference on Robotics and Automation (ICRA)},
  pages={2520--2525},
  year={2011},
  organization={IEEE}
}

@inproceedings{yuan2017robust,
  title={On robust adaptive control of switched linear systems},
  author={Yuan, Shuai and De Schutter, Bart and Baldi, Simone},
  booktitle={2017 IEEE International Conference on Control \& Automation (ICCA)},
  pages={753--758},
  year={2017},
  organization={IEEE}
}

@IEEEtranBSTCTL{IEEEexample:BSTcontrol,
 CTLuse_forced_etal       = "yes",
 CTLmax_names_forced_etal = "5",
 CTLnames_show_etal       = "1",
 CTLdash_repeated_names   = "no"
}

@article{kim2016vision,
  title={Vision-guided aerial manipulation using a multirotor with a robotic arm},
  author={Kim, Suseong and Seo, Hoseong and Choi, Seungwon and Kim, H Jin},
  journal={IEEE/ASME Transactions on Mechatronics},
  volume={21},
  number={4},
  pages={1912--1923},
  year={2016},
  publisher={IEEE}
}

@book{orsag2018aerial,
  title={Aerial Manipulation},
  author={Orsag, Matko and Korpela, Christopher and Oh, Paul and Bogdan, Stjepan and Ollero, Anibal},
  year={2018},
  publisher={Springer}
}

@article{kim2017robust,
  title={Robust control of an equipment-added multirotor using disturbance observer},
  author={Kim, Suseong and Choi, Seungwon and Kim, Hyeonggeun and Shin, Jongho and Shim, Hyungbo and Kim, H Jin},
  journal={IEEE Transactions on Control Systems Technology},
  volume={26},
  number={4},
  pages={1524--1531},
  year={2017},
  publisher={IEEE}
}

@article{chen2020robust,
  title={Robust control for unmanned aerial manipulator under disturbances},
  author={Chen, Yanjie and Zhan, Weiwei and He, Bingwei and Lin, Lixiong and Miao, Zhiqiang and Yuan, Xiaofang and Wang, Yaonan},
  journal={IEEE Access},
  volume={8},
  pages={129869--129877},
  year={2020},
  publisher={IEEE}
}

@article{chen2022adaptive,
  title={Adaptive Sliding-Mode Disturbance Observer-Based Finite-Time Control for Unmanned Aerial Manipulator With Prescribed Performance},
  author={Chen, Yanjie and Liang, Jiacheng and Wu, Yangning and Miao, Zhiqiang and Zhang, Hui and Wang, Yaonan},
  journal={IEEE Transactions on Cybernetics},
volume={53},
  number={5},
  pages={3263--3276},
  year={2023},
  publisher={IEEE}
}

@article{liang2021low,
  title={Low-complexity prescribed performance control for unmanned aerial manipulator robot system under model uncertainty and unknown disturbances},
  author={Liang, Jiacheng and Chen, Yanjie and Lai, Ningbin and He, Bingwei and Miao, Zhiqiang and Wang, Yaonan},
  journal={IEEE Transactions on Industrial Informatics},
  volume={18},
  number={7},
  pages={4632--4641},
  year={2021},
  publisher={IEEE}
}

@article{orsag2017dexterous,
  title={Dexterous aerial robots—mobile manipulation using unmanned aerial systems},
  author={Orsag, Matko and Korpela, Christopher and Bogdan, Stjepan and Oh, Paul},
  journal={IEEE Transactions on Robotics},
  volume={33},
  number={6},
  pages={1453--1466},
  year={2017},
  publisher={IEEE}
}

@article{tognon2019truly,
  title={A truly-redundant aerial manipulator system with application to push-and-slide inspection in industrial plants},
  author={Tognon, Marco and Ch{\'a}vez, Hermes A Tello and Gasparin, Enrico and Sabl{\'e}, Quentin and Bicego, Davide and Mallet, Anthony and Lany, Marc and Santi, Gilles and Revaz, Bernard and Cort{\'e}s, Juan and others},
  journal={IEEE Robotics and Automation Letters},
  volume={4},
  number={2},
  pages={1846--1851},
  year={2019},
  publisher={IEEE}
}

@article{lee2022rise,
  title={{RISE}-based trajectory tracking control of an aerial manipulator under uncertainty},
  author={Lee, Dongjae and Byun, Jeonghyun and Kim, H Jin},
  journal={IEEE Control Systems Letters},
volume={6},
  pages={3379--3384},
  year={2022},
  publisher={IEEE}
}

@article{liang2022adaptive,
  title={Adaptive Prescribed Performance Control of Unmanned Aerial Manipulator With Disturbances},
  author={Liang, Jiacheng and Chen, Yanjie and Wu, Yangning and Miao, Zhiqiang and Zhang, Hui and Wang, Yaonan},
  journal={IEEE Transactions on Automation Science and Engineering},
volume={20},
  number={3},
  pages={1804--1814},
  year={2023},
  publisher={IEEE}
}

@article{suarez2020benchmarks,
  title={Benchmarks for aerial manipulation},
  author={Suarez, Alejandro and Vega, Victor M and Fernandez, Manuel and Heredia, Guillermo and Ollero, Anibal},
  journal={IEEE Robotics and Automation Letters},
  volume={5},
  number={2},
  pages={2650--2657},
  year={2020},
  publisher={IEEE}
}

@article{lai2018adaptive,
  title={Adaptive backstepping-based tracking control of a class of uncertain switched nonlinear systems},
  author={Lai, Guanyu and Liu, Zhi and Zhang, Yun and Chen, CL Philip and Xie, Shengli},
  journal={Automatica},
  volume={91},
  pages={301--310},
  year={2018},
  publisher={Elsevier}
}

@article{yuan2018robust,
  title={Robust adaptive tracking control of uncertain slowly switched linear systems},
  author={Yuan, Shuai and De Schutter, Bart and Baldi, Simone},
  journal={Nonlinear Analysis: Hybrid Systems},
  volume={27},
  pages={1--12},
  year={2018},
  publisher={Elsevier}
}

@article{roy2019reduced,
  title={On reduced-complexity robust adaptive control of switched {Euler-Lagrange} systems},
  author={Roy, Spandan and Baldi, Simone},
  journal={Nonlinear Analysis: Hybrid Systems},
  volume={34},
  pages={226--237},
  year={2019},
  publisher={Elsevier}
}

@article{roy2019simultaneous,
  title={A Simultaneous Adaptation Law for a Class of Nonlinearly Parametrized Switched Systems},
  author={Roy, Spandan and S. {Baldi}},
  journal={IEEE Control Systems Letters},
  volume={3},
  number={3},
  pages={487--492},
  year={2019},
  publisher={IEEE}
}

@ARTICLE{10758214,
  author={Li, Guanrui and Liu, Xinyang and Loianno, Giuseppe},
  journal={IEEE Transactions on Robotics}, 
  title={Human-Aware Physical Human–Robot Collaborative Transportation and Manipulation With Multiple Aerial Robots}, 
  year={2025},
  volume={41},
  number={},
  pages={762-781},
  doi={10.1109/TRO.2024.3502508}}

@ARTICLE{9462539,
  author={Ollero, Anibal and Tognon, Marco and Suarez, Alejandro and Lee, Dongjun and Franchi, Antonio},
  journal={IEEE Transactions on Robotics}, 
  title={Past, Present, and Future of Aerial Robotic Manipulators}, 
  year={2022},
  volume={38},
  number={1},
  pages={626-645},
  doi={10.1109/TRO.2021.3084395}}

@ARTICLE{10466505,
  author={Liang, Xiao and Wang, Yang and Yu, Hai and Zhang, Zhaopeng and Han, Jianda and Fang, Yongchun},
  journal={IEEE Transactions on Automation Science and Engineering}, 
  title={Observer-Based Nonlinear Control for Dual-Arm Aerial Manipulator Systems Suffering From Uncertain Center of Mass}, 
  year={2025},
  volume={22},
  number={},
  pages={1984-1995},
  doi={10.1109/TASE.2024.3373107}}

@ARTICLE{10505853,
  author={Li, Hai and Li, Zhan and Song, Fulin and Yu, Xinghu and Yang, Xuebo and Rodríguez-Andina, Juan J.},
  journal={IEEE Transactions on Industrial Electronics}, 
  title={Finite-Time Fast Adaptive Backstepping Attitude Control for Aerial Manipulators Based on Variable Coupling Disturbance Compensation}, 
  year={2024},
  volume={71},
  number={11},
  pages={14730-14739},
  doi={10.1109/TIE.2024.3383036}}

@ARTICLE{10722859,
  author={Li, Zhan and Li, Hai and Xu, Quman and Yu, Xinghu and Basin, Michael V.},
  journal={IEEE Transactions on Cybernetics}, 
  title={Coupling Disturbance Modeling and Compensation for Aerial Manipulator in Highly Dynamic Motion}, 
  year={2025},
  volume={55},
  number={1},
  pages={124-135},
  doi={10.1109/TCYB.2024.3476124}}

@ARTICLE{10701509,
  author={Yadav, Rishabh Dev and Dantu, Swati and Pan, Wei and Sun, Sihao and Roy, Spandan and Baldi, Simone},
  journal={IEEE/ASME Transactions on Mechatronics}, 
  title={Modular Adaptive Aerial Manipulation Under Unknown Dynamic Coupling Forces}, 
  year={2024},
  volume={},
  number={},
  pages={1-11},
  doi={10.1109/TMECH.2024.3457806}}

@inproceedings{dantu2023adaptive,
  title={Adaptive anti-swing control for clasping operations in quadrotors with cable-suspended payload},
  author={Dantu, Swati and Yadav, Rishabh Dev and Rachakonda, Ananth and Roy, Spandan and Baldi, Simone},
  booktitle={2023 62nd IEEE Conference on Decision and Control (CDC)},
  pages={503--508},
  year={2023},
  organization={IEEE}
}

@ARTICLE{10769989,
  author={Dantu, Swati and Yadav, Rishabh Dev and Rachakonda, Ananth and Roy, Spandan and Baldi, Simone},
  journal={IEEE/ASME Transactions on Mechatronics}, 
  title={Adaptive Tracking and Anti-Swing Control of Quadrotors Carrying Suspended Payload Under State-Dependent Uncertainty}, 
  year={2025},
  volume={30},
  number={6},
  pages={4568-4580},
  doi={10.1109/TMECH.2024.3492957}}

@article{yadav2025integrated,
  title={An integrated approach to aerial grasping: Combining a bistable gripper with adaptive control},
  author={Yadav, Rishabh Dev and Jones, Brycen and Gupta, Saksham and Sharma, Amitabh and Sun, Jiefeng and Zhao, Jianguo and Roy, Spandan},
  journal={IEEE/ASME Transactions on Mechatronics},
  year={2025},
  publisher={IEEE}
}

@inproceedings{sharma2025impedance,
  title={Impedance and stability targeted adaptation for aerial manipulator with unknown coupling dynamics},
  author={Sharma, Amitabh and Gupta, Saksham and Singh, Shivansh Pratap and Yadav, Rishabh Dev and Song, Hongyu and Pan, Wei and Roy, Spandan and Baldi, Simone},
  booktitle={2025 25th International Conference on Control, Automation and Systems (ICCAS)},
  pages={471--476},
  year={2025},
  organization={IEEE}
}

@article{yadav2026physics,
  title={Physics-aware sparse learning and selective online adaptation for euler-lagrange robot dynamics},
  author={Yadav, Rishabh Dev and Ujjawal, Samaksh and Sun, Sihao and Roy, Spandan and Pan, Wei},
  journal={arXiv preprint arXiv:2606.09640},
  year={2026}
}

@article{yadav2026learning,
  title={Learning Cross-Coupled and Regime Dependent Dynamics for Aerial Manipulation},
  author={Yadav, Rishabh Dev and Ujjawal, Samaksh and Sun, Sihao and Roy, Spandan and Pan, Wei},
  journal={arXiv preprint arXiv:2605.14805},
  year={2026}
}

@article{ujjawal2025aermani,
  title={Aermani-diffusion: Regime-conditioned diffusion for dynamics learning in aerial manipulators},
  author={Ujjawal, Samaksh and Singh, Shivansh Pratap and Nair, Naveen Sudheer and Yadav, Rishabh Dev and Pan, Wei and Roy, Spandan},
  journal={arXiv preprint arXiv:2512.10773},
  year={2025}
}

@article{ujjawal2026learn,
  title={Learn structure, adapt on the fly: Multi-scale residual learning and online adaptation for aerial manipulators},
  author={Ujjawal, Samaksh and Nair, Naveen Sudheer and Singh, Shivansh Pratap and Yadav, Rishabh Dev and Pan, Wei and Roy, Spandan},
  journal={arXiv preprint arXiv:2603.11638},
  year={2026}
}

@article{singh2026aerograb,
  title={Aerograb: A unified framework for aerial grasping in cluttered environments},
  author={Singh, Shivansh Pratap and Nair, Naveen Sudheer and Ujjawal, Samaksh and Mishra, Sarthak and Patil, Soham and Yadav, Rishabh Dev and Roy, Spandan},
  journal={arXiv preprint arXiv:2603.15097},
  year={2026}
}

@article{mishra2026aeroplace,
  title={AeroPlace-Flow: Language-Grounded Object Placement for Aerial Manipulators via Visual Foresight and Object Flow},
  author={Mishra, Sarthak and Yadav, Rishabh Dev and Nair, Naveen and Pan, Wei and Roy, Spandan},
  journal={arXiv preprint arXiv:2603.07744},
  year={2026}
}

@article{mishra2025aermani,
  title={Aermani-vlm: Structured prompting and reasoning for aerial manipulation with vision language models},
  author={Mishra, Sarthak and Yadav, Rishabh Dev and Das, Avirup and Gupta, Saksham and Pan, Wei and Roy, Spandan},
  journal={arXiv preprint arXiv:2511.01472},
  year={2025}
}

@article{yadav2025arcade,
  title={Arcade: Adaptive robot control with online changepoint-aware bayesian dynamics learning},
  author={Yadav, Rishabh Dev and Das, Avirup and Song, Hongyu and Kaski, Samuel and Pan, Wei},
  journal={arXiv preprint arXiv:2512.14331},
  year={2025}
}

\begin{IEEEbiography}[{\includegraphics[width=1in,height=1.25in,clip,keepaspectratio]{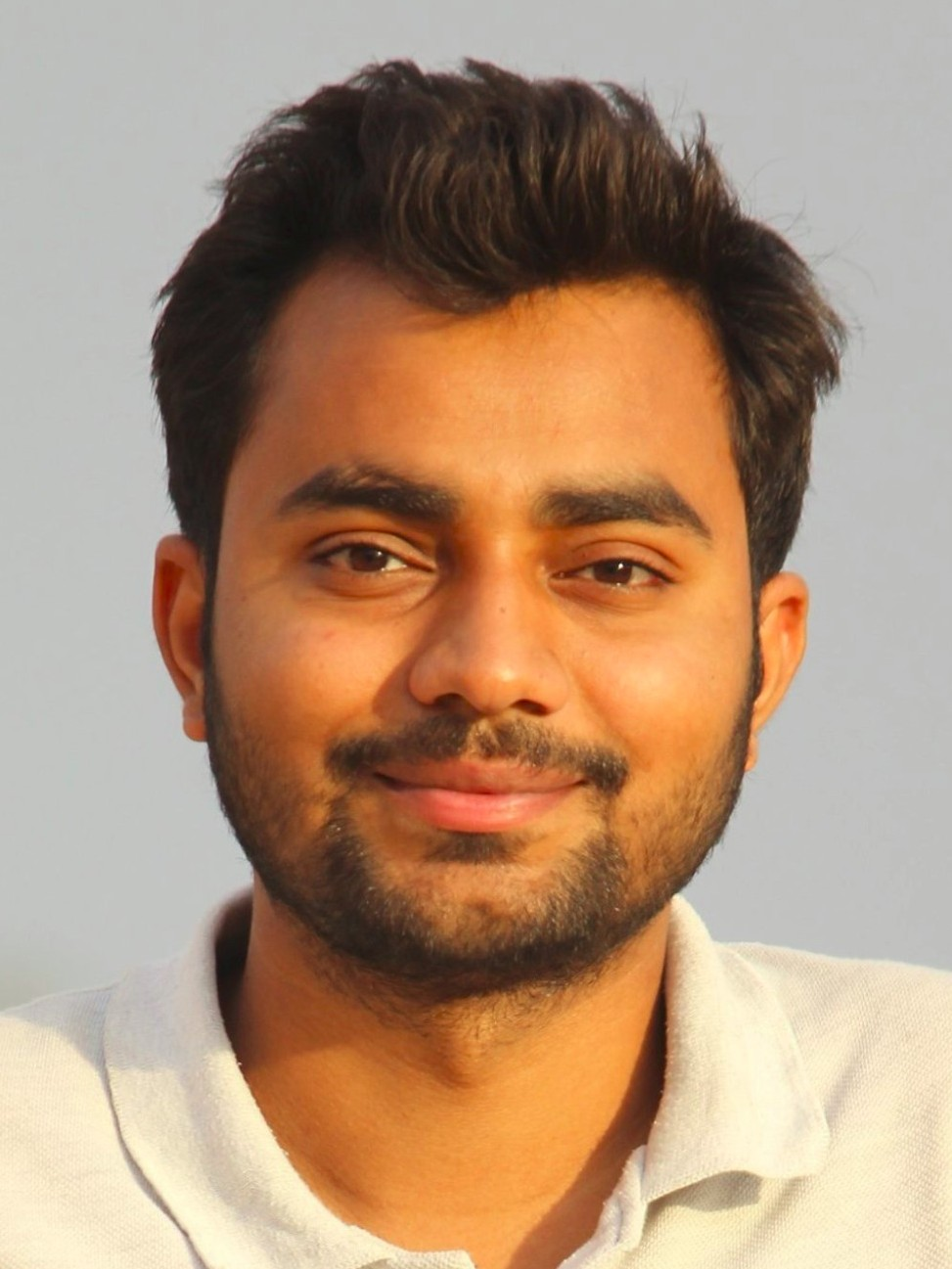}}]{Rishabh Dev Yadav} received his B.Tech degree in Mechanical Engineering from Indian Institute of Information Technology Jabalpur, India in 2019; Master of Science in Computer Science and Engineering by Research from International Institute of Information Technology Hyderabad, India in 2023. He is a Ph.D. student at the Department of Computer Science, The University of Manchester, UK. His research interests include applied adaptive-robust control for Unmanned Aerial Vehicles.  
\end{IEEEbiography}

\begin{IEEEbiography}[{\includegraphics[width=1in,height=1.25in,clip,keepaspectratio]{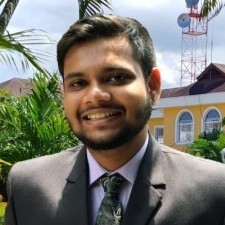}}]{Saksham Gupta} received his B.Tech in Mechatronics from Mukesh Patel School of Technology Management and Engineering (NMIMS Univerity, Mumbai) in 2020 and is currently pursuing Master of Science in Electronic and Communication Engineering by Research from International Institute of Information Technology Hyderabad.
His current research interests include applied adaptive-robust control and Bio-Medical Robotics.
\end{IEEEbiography}

\begin{IEEEbiography}[{\includegraphics[width=1in,height=1.25in,clip,keepaspectratio]{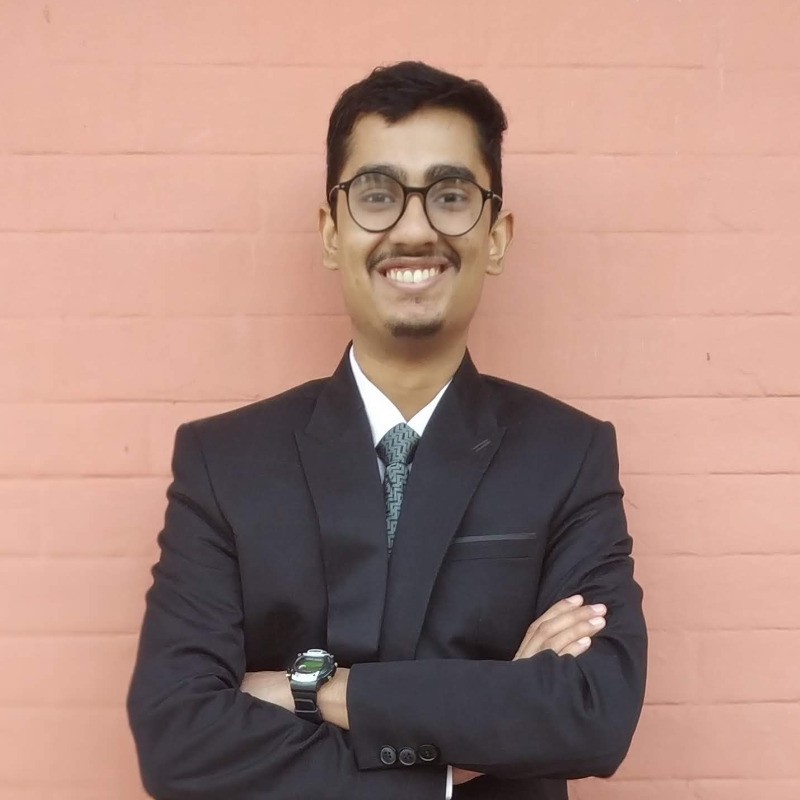}}]{Amitabh Sharma} received his Bachelor of Engineering from Birla Institute of Technology and Science, Pilani, in Electronics and Instrumentation Engineering in 2020 and is currently pursuing Master of Science in Electronic and Communication Engineering by Research from International Institute of Information Technology Hyderabad.
His current research interests include adaptive-robust control applied to various robotic systems.
\end{IEEEbiography}

\begin{IEEEbiography}
[{\includegraphics[width=1in,height=1.25in,clip,keepaspectratio]{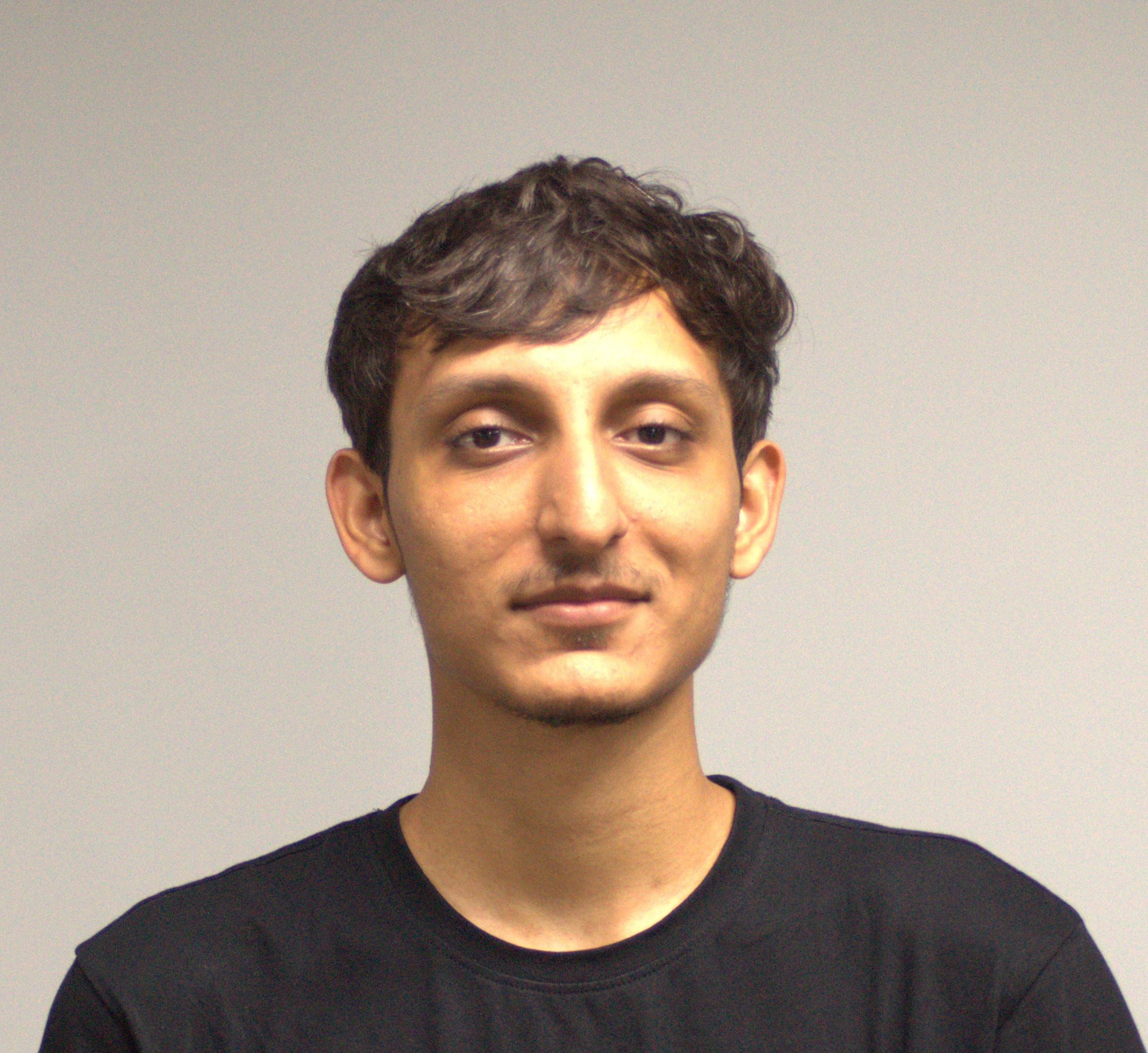}}]{Sarthak Mishra} received his B.Tech degree in Computer Engineering from NMIMS's Mukesh Patel School of Technology Management and Engineering Mumbai, India in 2023. He is currently pursuing Master's of Science in Computer Science and Engineering by Research at the International Institute of Information Technology Hyderabad, India. His research interests include language guided control and adaptive control. 
	\end{IEEEbiography}

\begin{IEEEbiography}[{\includegraphics[width=1in,height=1.25in,clip,keepaspectratio]{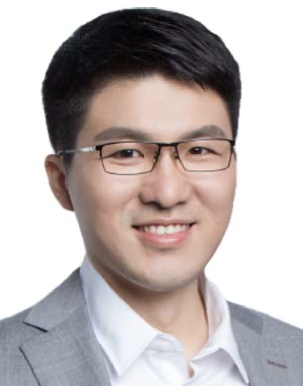}}]{Wei Pan} is an Associate Professor in Machine Learning at the University of Manchester, UK. His research focuses on  machine learning and control theory for robotics and dynamic systems. He was an Assistant Professor at TU Delft and a Project Leader at DJI. He received his Ph.D. from Imperial College London. Dr. Pan serves as Area Chair or Associate Editor for IEEE Transactions on Robotics, IEEE Robotics and Automation Letters, ACM Transactions on Probabilistic Machine Learning, RSS, CoRL, L4DC, ICRA, and IROS.
\end{IEEEbiography}

\begin{IEEEbiography}[{\includegraphics[width=1in,height=1.25in,clip,keepaspectratio]{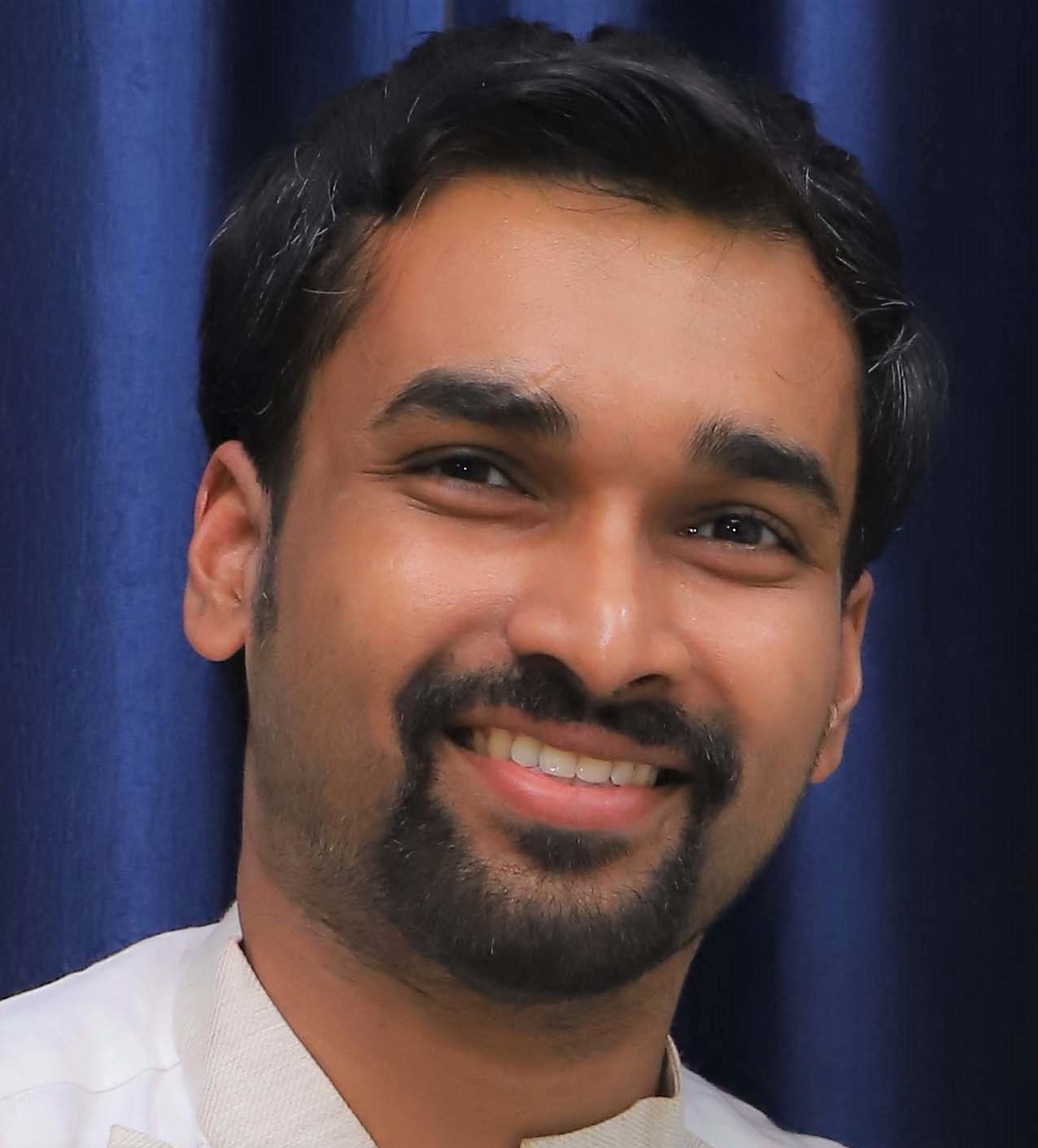}}]{Spandan Roy} (M'18) received his B.Tech degree in Electronics and Communication Engineering from Techno India, West Bengal University of Technology,  India in 2011, M.Tech. degree in Mechatronics from Academy of Scientific and Innovative Research, India in 2013 and Ph.D. degree in Control and Automation from Indian Institute of Technology Delhi, India in 2018. He is currently an assistant professor at the Robotics Research Center, International Institute of Information Technology Hyderabad, India. Previously, he was a postdoctoral researcher in Delft Center for System and Control, TU Delft. He is a subject editor of \emph{Int. Journal of Adaptive Control and Signal Processing}, an associate editor of \emph{IEEE Control Systems Letters} and a technical editor of \emph{IEEE/ASME Trans. on Mechatronics}. His research interests include adaptive-robust control, switched systems, and its applications in Euler-Lagrange systems.
\end{IEEEbiography}

\begin{IEEEbiography}[{\includegraphics[width=1in,height=1.25in,clip,keepaspectratio]{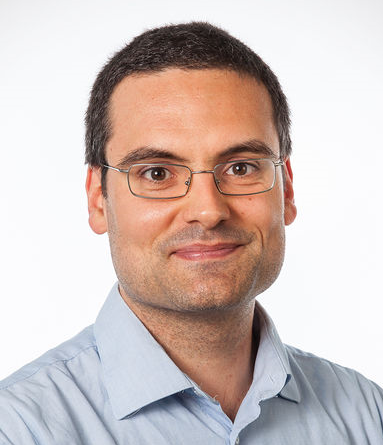}}]
{Simone Baldi} (M'14, SM'19) received the B.Sc. in electrical engineering, and the M.Sc. and Ph.D. in automatic control engineering from University of Florence, Italy, in 2005, 2007, and 2011. Since 2019, he is a Professor with the School of Mathematics, Southeast University. Before that, he was an assistant professor at Delft Center for Systems and Control, TU Delft from 2014 to 2019. He was awarded outstanding reviewer for \emph{Applied Energy} (2016) and \emph{Automatica} (2017) and outstanding associate editor
for \emph{Journal of the Franklin Institute} (2023-2024) and \emph{IEEE Control Systems Letters} (2024). He is a subject editor of \emph{Int. Journal of Adaptive Control and Signal Processing}, a senior editor of \emph{IEEE Control Systems Letters}, an associate editor of \emph{IEEE Trans. on Automatic Control}, \emph{IEEE Trans. on Automation Science and Engineering} and \emph{IEEE/ASME Trans. on Mechatronics}. His research interests are adaptive and learning systems with applications in autonomous vehicles and smart energy systems.
\end{IEEEbiography}

\end{document}